\documentclass[11pt]{article}

\usepackage[T1]{fontenc}
\usepackage{mathpazo}
\usepackage[margin=1.1in]{geometry}
\usepackage{amsmath, amssymb, amsthm}
\usepackage{mathtools}
\usepackage{graphicx}
\usepackage[font=sf]{caption}
\numberwithin{equation}{section}
\makeatletter
\renewcommand{\tagform@}[1]{%
  \makebox[1sp][l]{\hspace{\dimexpr\marginparsep+0.15in\relax}%
    \scalebox{0.75}[1]{\fontfamily{cmr}\selectfont\itshape\small
      Eq~\ignorespaces#1\unskip\@@italiccorr}}%
}
\makeatother
\usepackage[expansion=false]{microtype}
\usepackage{booktabs}
\usepackage{tikz}
\usepackage[colorlinks=true, linkcolor=black, urlcolor=black]{hyperref}
\usetikzlibrary{arrows.meta, calc, decorations.pathreplacing}

\definecolor{vcol}{HTML}{D85A30}
\definecolor{ucol}{HTML}{1D9E75}
\definecolor{gcol}{HTML}{888780}

\newcommand{\R}{\mathbb{R}}
\newcommand{\T}{^{\mathsf{T}}}
\newcommand{\ip}[2]{\langle #1, #2 \rangle}
\newcommand{\Bip}[2]{\left\langle #1, #2 \right\rangle}
\newcommand{\vv}[1]{\mathbf{v}_{#1}}
\newcommand{\uu}[1]{\mathbf{u}_{#1}}

\newtheoremstyle{plainish}{1em}{1em}{\itshape}{}{\bfseries}{.}{ }{}
\theoremstyle{plainish}
\newtheorem*{claim}{Claim}

\title{\bfseries Singular Value Decomposition\\[0.3em]
\large\mdseries A geometric rediscovery --- where proofs become algorithms}
\author{Paul Agron}
\date{\today}

\begin{document}
\maketitle
\thispagestyle{empty}

\begin{abstract}
\noindent
This article has a hidden agenda. On its surface it is a leisurely geometric
rediscovery of the singular value decomposition --- but its real claim is that
the pretty piece of mathematics it builds turns out to be the machinery quietly
running a great deal of your machine learning. The same construction that
answers an idle question about ellipses is the algorithm behind principal
component analysis, kernel methods, and PageRank; and it is not merely the
\emph{results} that transfer, but the \emph{proofs} themselves, run as
procedures.

The usual introduction to the SVD states the factorization $A = U\Sigma V\T$ and
justifies it by applying the spectral theorem to $A\T A$ --- correct, and
unilluminating: the geometry arrives already compressed into an algebraic
identity, and a powerful theorem is assumed at the outset to reach a result that
is, in the end, about ellipses. Part~I reverses the order, following the path one
might actually take to discover the decomposition. A linear map sends the unit
circle to an ellipse; an ellipse has axes; one asks what input directions map to
them, and finds --- across example after example --- that they are perpendicular.
In the plane this can be \emph{watched}: rotate a frame, track how badly its
images miss being perpendicular, and a sign change forces a configuration where
they do not miss at all. That surviving frame is where the map stretches hardest.
The equivalence generalizes by maximizing the stretch and recursing, singular
values falling out in order --- and the construction turns out to have proved the
spectral theorem rather than assuming it. The classical statement arrives at the
end, as a name for what the geometry already built.

Part~II collects the debt. Each of the constructions from Part~I reappears as a
working tool: the maximize-and-recurse proof becomes the power method and
PageRank; the lemma locating the maximizer becomes the stopping rule of gradient
descent; the duality between $A\T A$ and $AA\T$ becomes the transport at the
heart of kernel PCA. Every connection is stated with its boundary --- what the
decomposition supplies and where some other idea takes over --- because
``the SVD is everywhere'' is less useful than knowing exactly which part of a
method it is responsible for.

The prerequisites are the standard sophomore sequence; worked examples are small
enough to verify by hand, and every numerical and symbolic claim has been
checked.
\end{abstract}

\section*{Orientation}
\addcontentsline{toc}{section}{Orientation}

This is a mathematics article written with a data scientist's temperament. It
does not open with a definition and a theorem; it opens with an experiment ---
run a matrix on the unit circle and look at what comes back --- notices a
pattern, checks whether the pattern survives in higher dimensions, and only then
sets out to prove it. That arc, from observation to conjecture to proof, is not
just how the material is delivered here; it is the lesson. The singular value
decomposition is usually handed to the reader fully formed. This article instead
tries to let it be \emph{discovered}, the way one might actually stumble onto it
by paying attention to a picture.

\paragraph{Tools you will want in hand.} The prerequisites are the standard
sophomore sequence for a mathematics, physics, or data-science student, and
nothing beyond it. From multivariable calculus (Calc III): directional
derivatives and Lagrange multipliers. The one genuinely calculus-flavored step
in the article --- \S\ref{sec:variational}, where we maximize the length of an
image vector over the unit sphere --- is exactly the reasoning behind Lagrange
multipliers, carried out by hand with an explicit curve rather than a multiplier;
a reader who has seen constrained optimization will recognize it even though no
$\lambda$ appears. From a proof-based linear algebra course: inner products and
their bilinearity, orthonormal bases, orthogonal matrices, and
eigenvalues and eigenvectors through the spectral theorem. From single-variable
calculus: the intermediate value theorem, which does the entire work of the
first existence proof (\S\ref{sec:existence}). And throughout, comfort reading a
short proof --- the content prerequisites are light, but past
\S\ref{sec:observe} the article genuinely argues rather than merely computes.

\paragraph{Two signposts.} First, you will meet the spectral theorem twice: once
as a result you already know and are asked to recall, and once, near the end of
the construction (\S\ref{sec:algebraic}), as something the geometry rebuilds from
scratch without assuming it. That doubling is deliberate --- the point is to
watch a theorem you were handed get \emph{earned}. Second, this article is a
stepping stone. The singular value decomposition is the geometric foundation
underneath principal component analysis, low-rank approximation, latent-factor
models, and much of the linear structure in modern machine learning. If you are
heading toward those tools, the picture built here is what they stand on;
\S\ref{sec:pca} makes the first of those connections explicit.

\part{Discovering the decomposition}

\section{Circles become ellipses}
\label{sec:observe}

Fix a matrix $A \in \R^{m \times n}$ and think of it as a map $\R^n \to \R^m$.
Feed it the unit circle --- all input vectors of length one --- and look at where
they land. Take
\begin{equation}A = \begin{pmatrix} 2 & 1 \\ -0.5 & 1.5 \end{pmatrix},
\end{equation}
which will serve as the running example. Applying $A$ to the unit circle produces
the closed curve on the right of Figure~\ref{fig:ellipse}: an \emph{ellipse}. Try
other matrices and the same thing happens --- the circle is stretched and turned,
but it always comes out an ellipse, never a peanut or a teardrop. (For a
degenerate matrix the ellipse can flatten to a segment, and in higher dimensions
the sphere becomes an ellipsoid; \S\ref{sec:ellipsoid} makes this precise. For
now the picture is all we need.)

An ellipse has two distinguished directions: its \emph{axes}, the long one and
the short one, perpendicular to each other. They are features of the output. It
is natural to ask what they came \emph{from} --- what input directions $A$ sent
to them.

\begin{figure}[htbp]
\centering
\begin{tikzpicture}[scale=1.4, >={Stealth[length=2mm]}]
\begin{scope}
  \draw[gcol!50, thin] (-1.35,0) -- (1.35,0);
  \draw[gcol!50, thin] (0,-1.35) -- (0,1.35);
  \draw[gcol, thin] (0,0) circle (1);
  \draw[->, vcol, very thick] (0,0) -- (0.8281,0.5606);
  \draw[->, ucol, very thick] (0,0) -- (-0.5606,0.8281);
  \node[vcol, font=\small] at (1.10,0.70) {$\vv1$};
  \node[ucol, font=\small] at (-0.78,1.02) {$\vv2$};
  \node[gcol, font=\scriptsize] at (0,-1.62) {unit circle};
  \node[gcol, font=\scriptsize] at (0,-1.92) {preimages of the axes};
\end{scope}
\node at (2.15,0) {$\xmapsto{\;\;A\;\;}$};
\begin{scope}[xshift=4.6cm]
  \draw[gcol!50, thin] (-2.45,0) -- (2.45,0);
  \draw[gcol!50, thin] (0,-1.75) -- (0,1.75);
  \draw[gcol!35, thin, dashed] (0,0) circle (1);
  \draw[gcol, thin, rotate=10.90] (0,0) ellipse (2.2575 and 1.5504);
  \draw[->, vcol, very thick] (0,0) -- (2.2168,0.4269);
  \draw[->, ucol, very thick] (0,0) -- (-0.2932,1.5224);
  \node[vcol, font=\small] at (2.44,0.60) {$\sigma_1 \uu1$};
  \node[ucol, font=\small] at (-0.62,1.68) {$\sigma_2 \uu2$};
  \node[gcol, font=\scriptsize] at (0,-2.02) {image ellipse};
  \node[gcol, font=\scriptsize] at (0,-2.32) {axes (long, short)};
\end{scope}
\end{tikzpicture}
\caption{The unit circle maps to an ellipse under
$A = \left(\protect\begin{smallmatrix}2&1\\-0.5&1.5\protect\end{smallmatrix}\right)$.
The ellipse's axes point along $\uu1$ (long, length $\sigma_1$) and $\uu2$
(short, length $\sigma_2$). Their preimages $\vv1, \vv2$ --- the input directions
$A$ sent to the axes --- are marked on the circle. The question of
\S\ref{sec:evidence} is whether those preimages are perpendicular.}
\label{fig:ellipse}
\end{figure}

\begin{quote}
\itshape
Are the preimages of the ellipse's axes special?
\end{quote}

This is the question the first half of the article answers. Nothing about the
setup suggests the preimages should be nice: $A$ mangles most directions, tilting
and skewing as it goes. But the axes are where the image is longest and shortest,
so their preimages are at least distinguished by \emph{something}, and it is worth
looking.

\subsection{What the examples show}
\label{sec:evidence}

Compute the axis-preimages for a few matrices and a pattern appears at once.

\begin{center}
\renewcommand{\arraystretch}{1.3}
\begin{tabular}{@{}lcccc@{}}
\toprule
$A$ & axes at & preimages at & axes $\perp$? & preimages $\perp$? \\
\midrule
$\left(\begin{smallmatrix}2&1\\-0.5&1.5\end{smallmatrix}\right)$
  & $10.9^\circ,\,100.9^\circ$ & $34.1^\circ,\,124.1^\circ$ & yes & yes \\
$\left(\begin{smallmatrix}1&1\\0&1\end{smallmatrix}\right)$
  & $31.7^\circ,\,121.7^\circ$ & $58.3^\circ,\,148.3^\circ$ & yes & yes \\
$\left(\begin{smallmatrix}1&2\\2&1\end{smallmatrix}\right)$
  & $45^\circ,\,{-45}^\circ$ & $45^\circ,\,135^\circ$ & yes & yes \\
\bottomrule
\end{tabular}
\end{center}

\noindent
The axes are perpendicular --- of course they are, they are the axes of an
ellipse. The striking column is the last one: \emph{their preimages are
perpendicular too.} The two input directions that $A$ sends to the ellipse's axes
form a right angle, and $A$ carries that right angle to another right angle (the
axes).

This should be surprising, because $A$ mangles right angles as a rule. The shear
$\left(\begin{smallmatrix}1&1\\0&1\end{smallmatrix}\right)$ sends the
perpendicular pair $e_1, e_2$ to $(1,0)$ and $(1,1)$, which meet at $45^\circ$;
run almost any perpendicular pair through almost any matrix and it comes out
skew. Perpendicularity is not something linear maps respect. So a pair that $A$
keeps perpendicular is a genuine exception, not the norm --- which is exactly
what makes ``is there such a pair, and does it always land on the axes?'' a real
question rather than an idle one. Every example above answers \emph{yes}, but
examples are not a proof, and it is not yet clear why it should be true or
whether the perpendicular preimage pair is unique. The next sections settle both,
first in the plane, then in general.

Because this pair is what we are hunting, give its members names now. Write
$\vv1, \vv2$ for the preimages of the long and short axes, and $\sigma_1 \geq
\sigma_2$ for the axis half-lengths, so that
\begin{equation}A\vv1 = \sigma_1 \uu1, \qquad A\vv2 = \sigma_2 \uu2,
\end{equation}
where $\uu1, \uu2$ are the unit vectors along the axes. These are the
\emph{singular values} ($\sigma_i$), \emph{right singular vectors} ($\vv i$), and
\emph{left singular vectors} ($\uu i$) of $A$, and assembling the two relations
into matrix form gives $A = U\Sigma V\T$ --- the singular value decomposition.
\S\ref{sec:naming} returns to the names; the task between here and there is to
prove the frame exists.

\section{Why the perpendicular pair must exist}
\label{sec:existence}

The examples of \S\ref{sec:evidence} suggest that among all perpendicular input
pairs, there is one whose images stay perpendicular --- and that this one lands
on the ellipse's axes. To prove it, forget the ellipse for a moment and hunt
directly for such a pair. Take a perpendicular frame $\vv1(\theta), \vv2(\theta)$
and rotate it rigidly as $\theta$ varies, watching what $A$ does to the right
angle between the arms. Measure the failure with
\begin{equation}f(\theta) \;=\; \ip{A\vv1(\theta)}{A\vv2(\theta)},
\end{equation}
the inner product of the two \emph{images}. The inner product is the natural
gauge of ``how far from perpendicular'': it is zero exactly when the images are
perpendicular, positive when the angle between them has closed below a right
angle, negative when it has opened past one (Figure~\ref{fig:sweep}). So $f$
measures the \emph{damage} $A$ does to the right angle; the input arms stay
perpendicular throughout, by construction, and we are asking whether any $\theta$
makes the damage vanish. If one does, that frame's images are perpendicular ---
and, being the images of a perpendicular pair under $A$, they are exactly the
kind of pair the axes are.

\subsection{The sign flips every quarter turn}
\label{sec:planar}

Rotate the frame by exactly $90^\circ$. What happens to its two arms?

\begin{itemize}\itemsep0.2em
\item $\vv1$ rotated by $90^\circ$ becomes $\vv2$. That is what perpendicular means.
\item $\vv2$ rotated by $90^\circ$ becomes $-\vv1$. Another quarter turn past
$\vv2$ lands \emph{opposite} $\vv1$, not on it.
\end{itemize}

\noindent
So $\vv1(\theta + 90^\circ) = \vv2(\theta)$ and $\vv2(\theta+90^\circ) = -\vv1(\theta)$.
Apply $A$, use linearity to pull the sign out, and use the symmetry of the
inner product:
\begin{equation}f(\theta+90^\circ)
\;=\; \ip{A\vv2}{A(-\vv1)}
\;=\; -\ip{A\vv2}{A\vv1}
\;=\; -\ip{A\vv1}{A\vv2}
\;=\; -f(\theta).
\end{equation}

This is worth pausing on, because the swap alone would achieve nothing --- the
inner product is symmetric, so exchanging its two arguments leaves it
unchanged. \emph{The minus sign is doing all the work.} And that minus sign is
unavoidable: you cannot rotate an orthonormal pair by a quarter turn and
recover the same pair, because one arm always comes back reversed.

\begin{claim}
Every matrix preserves at least one right angle.
\end{claim}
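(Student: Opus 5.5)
The plan is to apply the intermediate value theorem to the function $f(\theta) = \ip{A\vv1(\theta)}{A\vv2(\theta)}$ just introduced. I will work first in the plane, with $A \in \R^{m\times 2}$ a map out of $\R^2$, and parametrize the rotating frame explicitly as $\vv1(\theta) = (\cos\theta, \sin\theta)$ and $\vv2(\theta) = (-\sin\theta, \cos\theta)$. For every $\theta$ this is an orthonormal pair, and it satisfies the quarter-turn relations used above. The claim to be proved is that some $\theta^*$ has $f(\theta^*) = 0$. Then $\vv1(\theta^*), \vv2(\theta^*)$ is a right angle in the input whose images under $A$ are perpendicular, which is exactly a right angle that $A$ preserves.

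The first step is continuity. By bilinearity, $f(\theta)$ expands as a combination of $\cos^2\theta$, $\sin^2\theta$ and $\sin\theta\cos\theta$, with coefficients that are fixed entries of $A\T A$. So $f$ is a trigonometric polynomial and certainly continuous on $[0, 90^\circ]$.

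The second step is the sign change. The identity $f(\theta+90^\circ) = -f(\theta)$ derived just before the claim gives $f(90^\circ) = -f(0)$. If $f(0) = 0$, the starting frame $e_1, e_2$ already works. Otherwise $f(0)$ and $f(90^\circ)$ have strictly opposite signs, and the intermediate value theorem supplies $\theta^* \in (0, 90^\circ)$ with $f(\theta^*) = 0$.

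I should also address the degenerate cases. If $A\vv1(\theta^*)$ or $A\vv2(\theta^*)$ is the zero vector, the inner product still vanishes. "Preserves a right angle" should then be read as "maps a perpendicular pair to an orthogonal pair," with $0$ counted as orthogonal to everything. I would state this convention explicitly, since for a rank-deficient $A$ one image arm genuinely collapses.

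The main obstacle is scope. The word "every" suggests arbitrary $A \in \R^{m\times n}$, but this argument only produces one preserved right angle inside a chosen 2-dimensional input plane. That is still enough for the claim as stated. Any $n \ge 2$ lets me restrict $A$ to the span of $e_1, e_2$, which is a map from a plane into $\R^m$, and run the same argument there. (For $n=1$ there is no right angle to speak of, so the claim is vacuous.) The stronger statement, that the preserved pair extends to a full orthogonal frame on the axes, is deferred to the variational and recursive argument of later sections.
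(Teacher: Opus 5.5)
Your argument is the paper's own: continuity of $f$, the quarter-turn identity $f(\theta+90^\circ) = -f(\theta)$ applied at $\theta_0 = 0$, and the intermediate value theorem. Your extra remarks are sound. A zero image counting as orthogonal is exactly what the paper's criterion $f=0$ implicitly assumes. Restricting $A$ to the plane spanned by $e_1, e_2$ is a more elementary route to $n \geq 2$ than the paper's pointer to the spectral theorem, though the spectral theorem yields a whole frame. One correction: for $n=1$ the claim is not vacuous but out of scope, since an existence statement over an empty set of right angles would be false.
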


\begin{proof}
$f$ is continuous, and $f(\theta_0 + 90^\circ) = -f(\theta_0)$ for any
$\theta_0$. If $f(\theta_0) = 0$ we are done. Otherwise $f(\theta_0)$ and
$f(\theta_0+90^\circ)$ have opposite signs, so by the intermediate value
theorem $f$ vanishes somewhere strictly between.
\end{proof}

\noindent
The argument used nothing about $A$ --- not invertibility, not squareness, not
any structure. In $\R^n$ the same conclusion holds via the spectral theorem
(\S\ref{sec:algebraic}); the planar version is just the case you can see.

\subsection{How many such angles are there?}
\label{sec:uniqueness-f}

Continuity gives existence but is silent on count. To settle that, compute $f$
in closed form. Writing $A = \left(\begin{smallmatrix} a & b \\ c & d\end{smallmatrix}\right)$
and $\vv1 = (\cos\theta, \sin\theta)$, $\vv2 = (-\sin\theta, \cos\theta)$, expanding
and collecting terms gives
\begin{equation}\boxed{\;
f(\theta) \;=\; (ab + cd)\cos 2\theta \;-\; \tfrac12\left(a^2 - b^2 + c^2 - d^2\right)\sin 2\theta
\;}
\end{equation}
Two features of this expression matter, and the second is the important one.

\emph{First}, everything is in $2\theta$, not $\theta$. That is the algebraic
shadow of the quarter-turn symmetry: advancing $\theta$ by $90^\circ$ advances
$2\theta$ by $180^\circ$, which negates both $\cos$ and $\sin$. The sign flip
proved above is visible directly in the formula.

\emph{Second, and crucially, there is no constant term.} $f$ is a pure
sinusoid oscillating about zero --- not a sinusoid plus an offset. This is why
the sign flip is not merely a curiosity but a guarantee: a sinusoid with a
nonzero mean might never reach zero, but this one must.

Write it in amplitude--phase form as
\begin{equation}f(\theta) = R\cos(2\theta - \varphi), \qquad
R = \sqrt{A_1^2 + A_2^2}, \quad \varphi = \operatorname{atan2}(A_2, A_1),
\end{equation}
where $A_1 = ab+cd$ and $A_2 = -\tfrac12(a^2-b^2+c^2-d^2)$. Suppose $R \neq 0$.
As $\theta$ runs over a half turn $[0^\circ, 180^\circ)$, the argument $2\theta$
runs over a \emph{full} turn, so the cosine vanishes exactly twice, at
\begin{equation}\theta = \frac{\varphi \pm 90^\circ}{2}.
\end{equation}
The two roots are $90^\circ$ apart (Figure~\ref{fig:sweep}). But a frame and the same frame rotated by
$90^\circ$ consist of \emph{the same pair of lines} --- as shown above, the
quarter turn merely swaps the arms and reverses one. So the two roots describe
one geometric object.

\begin{claim}
Unless $R = 0$, the frame is unique: exactly one perpendicular pair of
directions has perpendicular images, up to relabelling the arms and flipping
their signs.
\end{claim}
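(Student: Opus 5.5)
The plan is to read the claim off the closed form of $f$ and then translate the zeros of $f$ back into geometric frames. The key observation is that every perpendicular pair of unit directions is, after relabelling the arms and flipping signs, the rotated frame $\vv1(\theta), \vv2(\theta)$ for some $\theta$. So the question reduces to counting the roots of $f$, up to the identifications that do not change the underlying pair of lines.

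First I would make the parametrization precise. Take an orthonormal pair $\{\mathbf{p}, \mathbf{q}\}$, and write $\mathbf{p} = (\cos\theta, \sin\theta)$ for a unique $\theta \in [0^\circ, 360^\circ)$. Then $\mathbf{q} = \pm\vv2(\theta)$. Flipping the sign of $\mathbf{q}$ only negates $\ip{A\mathbf{p}}{A\mathbf{q}}$, so it does not affect whether that inner product vanishes. Likewise, swapping $\mathbf{p}$ and $\mathbf{q}$ leaves the inner product unchanged, by symmetry.

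Next I would list the angles that describe the same unordered pair of lines. Replacing $\theta$ by $\theta + 180^\circ$ negates both arms. Replacing $\theta$ by $\theta + 90^\circ$ swaps the arms and reverses one, as computed in \S\ref{sec:planar}. Hence the unordered pair of lines depends only on $\theta$ modulo $90^\circ$. The pair has perpendicular images exactly when $f(\theta) = 0$, and this condition is consistent on the class, since $f(\theta + 90^\circ) = -f(\theta)$ and $f(\theta + 180^\circ) = f(\theta)$.

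The main step is then the root count. By the amplitude--phase form, when $R \neq 0$ the equation $f(\theta) = 0$ becomes $\cos(2\theta - \varphi) = 0$. This means $2\theta \equiv \varphi + 90^\circ \pmod{180^\circ}$, i.e.\ $\theta \equiv (\varphi + 90^\circ)/2 \pmod{90^\circ}$. That is exactly one residue class modulo $90^\circ$, and the previous step identified such classes with geometric frames. So there is exactly one frame, which proves the claim.

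The only delicate point is the bookkeeping in the first step. One must check that the identifications (sign flips and relabelling) correspond precisely to $\theta$ modulo $90^\circ$, with nothing coarser or finer, so that a single residue class means a single frame. Ruling out a coarser identification is easy: distinct classes modulo $90^\circ$ give distinct lines through $\mathbf{p}$, since the direction of a line through the origin is determined modulo $180^\circ$ and the perpendicular line accounts for the remaining $90^\circ$ shift.

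For completeness I would note what happens in the excluded case. If $R = 0$, then $f \equiv 0$ and every frame works. This happens exactly when $ab + cd = 0$ and $a^2 + c^2 = b^2 + d^2$, meaning the columns of $A$ are orthogonal and of equal length. Then $A$ is a scalar multiple of an orthogonal matrix, and it maps the circle to a circle, which has no distinguished axes. This matches the geometry and explains why the hypothesis $R \neq 0$ is needed.
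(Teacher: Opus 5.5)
Your proof is correct and follows the paper's own argument: you write $f(\theta) = R\cos(2\theta - \varphi)$, observe that for $R \neq 0$ its zeros form the single class $\theta \equiv (\varphi + 90^\circ)/2 \pmod{90^\circ}$ (the paper phrases this as two roots in $[0^\circ, 180^\circ)$ lying $90^\circ$ apart), and use the quarter-turn identity to see that this class is one pair of lines. Your explicit check that every orthonormal pair, including an orientation-reversed one with $\mathbf{q} = -\vv{2}(\theta)$, is a rotated frame up to sign fills in a step the paper leaves implicit; the remark about ruling out a coarser identification is harmless but not needed for uniqueness.
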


\paragraph{The degenerate case.} $R = 0$ means $A_1 = A_2 = 0$, i.e.\ $f \equiv 0$
and \emph{every} frame works. The two coefficients are readable off $A\T A$:
\begin{equation}A\T A = \begin{pmatrix} a^2+c^2 & ab+cd \\ ab+cd & b^2+d^2\end{pmatrix},
\end{equation}
so $A_1$ is its off-diagonal entry and $-2A_2$ the difference of its diagonal
entries. Both vanish precisely when $A\T A = \sigma^2 I$ --- when $A$ is a
scalar multiple of an orthogonal matrix. Then $A$ maps the sphere to a sphere,
there is no direction of maximum stretch, and no frame is distinguished. The
generic case and the degenerate case both fall out of the one formula.

\begin{figure}[htbp]
\centering
\begin{tikzpicture}[scale=1, >={Stealth[length=1.3mm]}]
\draw[gcol!45, thin] (0.780,0.000) circle (0.620);
\draw[->, vcol, thick] (0.780,0.000) -- (0.888,-0.611);
\draw[->, ucol, thick] (0.780,0.000) -- (1.391,0.108);
\node[gcol, font=\tiny] at (1.620,0.000) {$\mapsto$};
\fill[vcol, opacity=0.13] (2.520,0.000) -- (2.265,-0.626) -- (3.377,-0.093) -- cycle;
\draw[->, vcol, thick] (2.520,0.000) -- (2.265,-0.626);
\draw[->, ucol, thick] (2.520,0.000) -- (3.377,-0.093);
\node[gcol, font=\scriptsize] at (0.780,-1.020) {$\theta=-80^\circ$};
\node[gcol, font=\scriptsize] at (2.520,-1.020) {$106^\circ$};
\node[gcol, font=\scriptsize] at (2.520,-1.420) {$f=-1.00$};
\draw[gcol!45, thin] (4.330,0.000) circle (0.620);
\draw[->, vcol, thick] (4.330,0.000) -- (4.768,-0.438);
\draw[->, ucol, thick] (4.330,0.000) -- (4.768,0.438);
\node[gcol, font=\tiny] at (5.170,0.000) {$\mapsto$};
\fill[vcol, opacity=0.13] (6.070,0.000) -- (6.353,-0.566) -- (6.919,0.283) -- cycle;
\draw[->, vcol, thick] (6.070,0.000) -- (6.353,-0.566);
\draw[->, ucol, thick] (6.070,0.000) -- (6.919,0.283);
\node[gcol, font=\scriptsize] at (4.330,-1.020) {$\theta=-45^\circ$};
\node[gcol, font=\scriptsize] at (6.070,-1.020) {$82^\circ$};
\node[gcol, font=\scriptsize] at (6.070,-1.420) {$f=0.50$};
\draw[gcol!45, thin] (7.880,0.000) circle (0.620);
\draw[->, vcol, thick] (7.880,0.000) -- (8.463,-0.212);
\draw[->, ucol, thick] (7.880,0.000) -- (8.092,0.583);
\node[gcol, font=\tiny] at (8.720,0.000) {$\mapsto$};
\fill[vcol, opacity=0.13] (9.620,0.000) -- (10.235,-0.393) -- (10.269,0.495) -- cycle;
\draw[->, vcol, thick] (9.620,0.000) -- (10.235,-0.393);
\draw[->, ucol, thick] (9.620,0.000) -- (10.269,0.495);
\node[gcol, font=\scriptsize] at (7.880,-1.020) {$\theta=-20^\circ$};
\node[gcol, font=\scriptsize] at (9.620,-1.020) {$70^\circ$};
\node[gcol, font=\scriptsize] at (9.620,-1.420) {$f=1.28$};
\draw[gcol!45, thin] (11.430,0.000) circle (0.620);
\draw[->, vcol, thick] (11.430,0.000) -- (12.050,0.000);
\draw[->, ucol, thick] (11.430,0.000) -- (11.430,0.620);
\node[gcol, font=\tiny] at (12.270,0.000) {$\mapsto$};
\fill[vcol, opacity=0.13] (13.170,0.000) -- (13.970,-0.200) -- (13.570,0.600) -- cycle;
\draw[->, vcol, thick] (13.170,0.000) -- (13.970,-0.200);
\draw[->, ucol, thick] (13.170,0.000) -- (13.570,0.600);
\node[gcol, font=\scriptsize] at (11.430,-1.020) {$\theta=0^\circ$};
\node[gcol, font=\scriptsize] at (13.170,-1.020) {$70^\circ$};
\node[gcol, font=\scriptsize] at (13.170,-1.420) {$f=1.25$};
\draw[gcol!45, thin] (0.780,-3.300) circle (0.620);
\draw[->, vcol, thick] (0.780,-3.300) -- (1.363,-3.088);
\draw[->, ucol, thick] (0.780,-3.300) -- (0.568,-2.717);
\node[gcol, font=\tiny] at (1.620,-3.300) {$\mapsto$};
\fill[vcol, opacity=0.13] (2.520,-3.300) -- (3.409,-3.283) -- (2.622,-2.668) -- cycle;
\draw[->, vcol, thick] (2.520,-3.300) -- (3.409,-3.283);
\draw[->, ucol, thick] (2.520,-3.300) -- (2.622,-2.668);
\node[gcol, font=\scriptsize] at (0.780,-4.320) {$\theta=20^\circ$};
\node[gcol, font=\scriptsize] at (2.520,-4.320) {$80^\circ$};
\node[gcol, font=\scriptsize] at (2.520,-4.720) {$f=0.64$};
\draw[gcol!45, thin] (4.330,-3.300) circle (0.620);
\draw[->, vcol, thick] (4.330,-3.300) -- (4.843,-2.952);
\draw[->, ucol, thick] (4.330,-3.300) -- (3.982,-2.787);
\node[gcol, font=\tiny] at (5.170,-3.300) {$\mapsto$};
\fill[vcol, opacity=0.13] (6.070,-3.300) -- (6.957,-3.129) -- (5.953,-2.691) -- cycle;
\draw[->, vcol, thick] (6.070,-3.300) -- (6.957,-3.129);
\draw[->, ucol, thick] (6.070,-3.300) -- (5.953,-2.691);
\node[gcol, font=\scriptsize] at (4.330,-4.320) {$\theta=34.1^\circ$};
\node[gcol, font=\scriptsize] at (6.070,-4.320) {$90^\circ$};
\node[gcol, font=\scriptsize] at (6.070,-4.720) {$f=0.00$};
\draw[gcol!45, thin] (7.880,-3.300) circle (0.620);
\draw[->, vcol, thick] (7.880,-3.300) -- (8.236,-2.792);
\draw[->, ucol, thick] (7.880,-3.300) -- (7.372,-2.944);
\node[gcol, font=\tiny] at (8.720,-3.300) {$\mapsto$};
\fill[vcol, opacity=0.13] (9.620,-3.300) -- (10.407,-2.923) -- (9.194,-2.792) -- cycle;
\draw[->, vcol, thick] (9.620,-3.300) -- (10.407,-2.923);
\draw[->, ucol, thick] (9.620,-3.300) -- (9.194,-2.792);
\node[gcol, font=\scriptsize] at (7.880,-4.320) {$\theta=55^\circ$};
\node[gcol, font=\scriptsize] at (9.620,-4.320) {$104^\circ$};
\node[gcol, font=\scriptsize] at (9.620,-4.720) {$f=-0.90$};
\draw[gcol!45, thin] (11.430,-3.300) circle (0.620);
\draw[->, vcol, thick] (11.430,-3.300) -- (11.590,-2.701);
\draw[->, ucol, thick] (11.430,-3.300) -- (10.831,-3.140);
\node[gcol, font=\tiny] at (12.270,-3.300) {$\mapsto$};
\fill[vcol, opacity=0.13] (13.170,-3.300) -- (13.763,-2.772) -- (12.501,-2.952) -- cycle;
\draw[->, vcol, thick] (13.170,-3.300) -- (13.763,-2.772);
\draw[->, ucol, thick] (13.170,-3.300) -- (12.501,-2.952);
\node[gcol, font=\scriptsize] at (11.430,-4.320) {$\theta=75^\circ$};
\node[gcol, font=\scriptsize] at (13.170,-4.320) {$111^\circ$};
\node[gcol, font=\scriptsize] at (13.170,-4.720) {$f=-1.33$};
\end{tikzpicture}
\vspace{1.2em}

\begin{tikzpicture}[scale=1, >={Stealth[length=1.6mm]}]
\draw[gcol!60, thin] (-3.990,0) -- (3.990,0);
\draw[gcol!60, thin] (0,-1.085) -- (0,1.085);
\draw[vcol, thick] (-3.780,-0.775) -- (-3.738,-0.764) -- (-3.696,-0.751) -- (-3.654,-0.738) -- (-3.612,-0.724) -- (-3.570,-0.709) -- (-3.528,-0.694) -- (-3.486,-0.677) -- (-3.444,-0.660) -- (-3.402,-0.641) -- (-3.360,-0.622) -- (-3.318,-0.602) -- (-3.276,-0.582) -- (-3.234,-0.561) -- (-3.192,-0.539) -- (-3.150,-0.516) -- (-3.108,-0.493) -- (-3.066,-0.469) -- (-3.024,-0.445) -- (-2.982,-0.420) -- (-2.940,-0.394) -- (-2.898,-0.369) -- (-2.856,-0.342) -- (-2.814,-0.315) -- (-2.772,-0.288) -- (-2.730,-0.261) -- (-2.688,-0.233) -- (-2.646,-0.205) -- (-2.604,-0.176) -- (-2.562,-0.148) -- (-2.520,-0.119) -- (-2.478,-0.090) -- (-2.436,-0.061) -- (-2.394,-0.032) -- (-2.352,-0.003) -- (-2.310,0.026) -- (-2.268,0.055) -- (-2.226,0.084) -- (-2.184,0.113) -- (-2.142,0.142) -- (-2.100,0.171) -- (-2.058,0.199) -- (-2.016,0.227) -- (-1.974,0.255) -- (-1.932,0.283) -- (-1.890,0.310) -- (-1.848,0.337) -- (-1.806,0.363) -- (-1.764,0.389) -- (-1.722,0.415) -- (-1.680,0.440) -- (-1.638,0.464) -- (-1.596,0.488) -- (-1.554,0.512) -- (-1.512,0.534) -- (-1.470,0.556) -- (-1.428,0.578) -- (-1.386,0.598) -- (-1.344,0.618) -- (-1.302,0.638) -- (-1.260,0.656) -- (-1.218,0.674) -- (-1.176,0.690) -- (-1.134,0.706) -- (-1.092,0.721) -- (-1.050,0.736) -- (-1.008,0.749) -- (-0.966,0.761) -- (-0.924,0.773) -- (-0.882,0.783) -- (-0.840,0.793) -- (-0.798,0.802) -- (-0.756,0.809) -- (-0.714,0.816) -- (-0.672,0.822) -- (-0.630,0.826) -- (-0.588,0.830) -- (-0.546,0.832) -- (-0.504,0.834) -- (-0.462,0.835) -- (-0.420,0.834) -- (-0.378,0.833) -- (-0.336,0.830) -- (-0.294,0.827) -- (-0.252,0.823) -- (-0.210,0.817) -- (-0.168,0.811) -- (-0.126,0.803) -- (-0.084,0.795) -- (-0.042,0.785) -- (0.000,0.775) -- (0.042,0.764) -- (0.084,0.751) -- (0.126,0.738) -- (0.168,0.724) -- (0.210,0.709) -- (0.252,0.694) -- (0.294,0.677) -- (0.336,0.660) -- (0.378,0.641) -- (0.420,0.622) -- (0.462,0.602) -- (0.504,0.582) -- (0.546,0.561) -- (0.588,0.539) -- (0.630,0.516) -- (0.672,0.493) -- (0.714,0.469) -- (0.756,0.445) -- (0.798,0.420) -- (0.840,0.394) -- (0.882,0.369) -- (0.924,0.342) -- (0.966,0.315) -- (1.008,0.288) -- (1.050,0.261) -- (1.092,0.233) -- (1.134,0.205) -- (1.176,0.176) -- (1.218,0.148) -- (1.260,0.119) -- (1.302,0.090) -- (1.344,0.061) -- (1.386,0.032) -- (1.428,0.003) -- (1.470,-0.026) -- (1.512,-0.055) -- (1.554,-0.084) -- (1.596,-0.113) -- (1.638,-0.142) -- (1.680,-0.171) -- (1.722,-0.199) -- (1.764,-0.227) -- (1.806,-0.255) -- (1.848,-0.283) -- (1.890,-0.310) -- (1.932,-0.337) -- (1.974,-0.363) -- (2.016,-0.389) -- (2.058,-0.415) -- (2.100,-0.440) -- (2.142,-0.464) -- (2.184,-0.488) -- (2.226,-0.512) -- (2.268,-0.534) -- (2.310,-0.556) -- (2.352,-0.578) -- (2.394,-0.598) -- (2.436,-0.618) -- (2.478,-0.638) -- (2.520,-0.656) -- (2.562,-0.674) -- (2.604,-0.690) -- (2.646,-0.706) -- (2.688,-0.721) -- (2.730,-0.736) -- (2.772,-0.749) -- (2.814,-0.761) -- (2.856,-0.773) -- (2.898,-0.783) -- (2.940,-0.793) -- (2.982,-0.802) -- (3.024,-0.809) -- (3.066,-0.816) -- (3.108,-0.822) -- (3.150,-0.826) -- (3.192,-0.830) -- (3.234,-0.832) -- (3.276,-0.834) -- (3.318,-0.835) -- (3.360,-0.834) -- (3.402,-0.833) -- (3.444,-0.830) -- (3.486,-0.827) -- (3.528,-0.823) -- (3.570,-0.817) -- (3.612,-0.811) -- (3.654,-0.803) -- (3.696,-0.795) -- (3.738,-0.785) -- (3.780,-0.775);
\fill[ucol] (1.432,0) circle (2.1pt);
\fill[ucol] (-2.348,0) circle (2.1pt);
\node[ucol, font=\scriptsize, anchor=south west] at (1.432,0.10) {$34.10^\circ$};
\node[ucol, font=\scriptsize, anchor=north east] at (-2.348,-0.10) {$-55.90^\circ$};
\draw[ucol!70, thin, dashed] (-2.348,-0.300) -- (-2.348,-1.020);
\draw[ucol!70, thin, dashed] (1.432,0.300) -- (1.432,-1.020);
\draw[<->, ucol!70, thin] (-2.348,-0.94) -- node[below, font=\scriptsize, fill=white, inner sep=1pt] {$90^\circ$} (1.432,-0.94);
\node[gcol, font=\scriptsize] at (-3.780,-0.22) {$-90^\circ$};
\node[gcol, font=\scriptsize] at (3.780,-0.22) {$90^\circ$};
\node[gcol, font=\scriptsize, anchor=south east] at (-4.05,0.05) {$f$};
\node[gcol, font=\scriptsize, anchor=north west] at (3.990,-0.06) {$\theta$};
\end{tikzpicture}
\caption{\textbf{Sweeping the frame.} Each panel shows an orthonormal input
frame (left, always perpendicular) and its image under
$A = \left(\begin{smallmatrix}2 & 1\\ -0.5 & 1.5\end{smallmatrix}\right)$
(right, generally not). The number below each image is the angle between
$A\vv1$ and $A\vv2$; the shaded wedge makes it visible. As $\theta$ increases
the wedge narrows to $70^\circ$, then widens back through $90^\circ$ --- the
highlighted panel, where the right angle survives --- and on past
$110^\circ$. The plot below tracks $f(\theta)$ over the full half-turn: a pure
sinusoid in $2\theta$ with no vertical offset, crossing zero exactly twice, at
$34.10^\circ$ and $-55.90^\circ$. Those two crossings are $90^\circ$ apart and
describe the same pair of lines.}
\label{fig:sweep}
\end{figure}

\subsection{The same thing happens in three dimensions}
\label{sec:threed}

The plane is settled: a perpendicular pair whose images are perpendicular exists,
is essentially unique, and lands on the ellipse's axes. Before generalizing,
it is worth checking whether the phenomenon even persists --- so run the
experiment one dimension up. In $\R^3$ the unit sphere maps to an ellipsoid,
which has \emph{three} axes, mutually perpendicular. Do their preimages form an
orthonormal triple, as the two axis-preimages did in the plane?

Take
\begin{equation}A = \begin{pmatrix} 1 & 2 & 0 \\ 0 & 1 & 1 \\ 1 & 0 & 1 \end{pmatrix},
\end{equation}
compute the ellipsoid's axes and pull them back. The semi-axis lengths come out
$\sigma_1 = 2.532$, $\sigma_2 = 1.347$, $\sigma_3 = 0.879$, and the three
preimages $\vv1, \vv2, \vv3$ satisfy
\begin{equation}\ip{\vv i}{\vv j} = 0 \quad (i \neq j), \qquad \|\vv i\| = 1 :
\end{equation}
a perfect orthonormal triple, angles $90.00^\circ$ to the numerical precision
checked. Repeating with other $3\times 3$ matrices --- symmetric, asymmetric,
random --- gives the same verdict every time (Figure~\ref{fig:threed}). The
ellipsoid's axes always pull back to a perpendicular frame, and $A$ carries that
frame to the axes: $A\vv i = \sigma_i \uu i$, just as in the plane. And it is
just as surprising: a \emph{generic} orthonormal triple, run through this same
$A$, comes out skewed --- the axis-preimages are again the exception, not the
rule.

\begin{figure}[htbp]
\centering
\begin{tikzpicture}[scale=1,>={Stealth[length=1.6mm]}]
\draw[gcol!40,thin] (0.000,0.000) circle (1.500);
\draw[gcol!25,very thin] (1.359,0.217) -- (1.284,0.265) -- (1.195,0.310) -- (1.092,0.352) -- (0.976,0.390) -- (0.849,0.423) -- (0.713,0.451) -- (0.569,0.475) -- (0.418,0.493) -- (0.263,0.505) -- (0.104,0.512) -- (-0.055,0.513) -- (-0.214,0.508) -- (-0.371,0.497) -- (-0.523,0.481) -- (-0.670,0.459) -- (-0.809,0.432) -- (-0.938,0.400) -- (-1.058,0.364) -- (-1.165,0.323) -- (-1.258,0.279) -- (-1.338,0.232) -- (-1.403,0.182) -- (-1.451,0.130) -- (-1.483,0.076) -- (-1.499,0.022) -- (-1.497,-0.033) -- (-1.478,-0.087) -- (-1.443,-0.140) -- (-1.391,-0.192) -- (-1.324,-0.241) -- (-1.241,-0.288) -- (-1.145,-0.332) -- (-1.035,-0.371) -- (-0.914,-0.407) -- (-0.782,-0.438) -- (-0.642,-0.464) -- (-0.494,-0.484) -- (-0.341,-0.500) -- (-0.184,-0.509) -- (-0.024,-0.513) -- (0.135,-0.511) -- (0.293,-0.503) -- (0.448,-0.490) -- (0.598,-0.471) -- (0.740,-0.446) -- (0.875,-0.417) -- (0.999,-0.383) -- (1.113,-0.344) -- (1.213,-0.302) -- (1.300,-0.256) -- (1.372,-0.207) -- (1.429,-0.156) -- (1.469,-0.103) -- (1.493,-0.049) -- (1.500,0.005) -- (1.490,0.060) -- (1.463,0.114) -- (1.419,0.166) -- (1.359,0.217);
\draw[gcol!25,very thin] (1.359,0.217) -- (1.352,0.365) -- (1.329,0.510) -- (1.291,0.649) -- (1.238,0.780) -- (1.171,0.902) -- (1.091,1.015) -- (0.999,1.115) -- (0.895,1.204) -- (0.781,1.278) -- (0.659,1.338) -- (0.529,1.383) -- (0.392,1.412) -- (0.252,1.425) -- (0.108,1.422) -- (-0.036,1.403) -- (-0.180,1.368) -- (-0.323,1.318) -- (-0.461,1.252) -- (-0.595,1.173) -- (-0.721,1.080) -- (-0.840,0.975) -- (-0.948,0.859) -- (-1.047,0.733) -- (-1.133,0.599) -- (-1.206,0.458) -- (-1.266,0.311) -- (-1.312,0.162) -- (-1.342,0.010) -- (-1.358,-0.141) -- (-1.358,-0.292) -- (-1.342,-0.438) -- (-1.312,-0.580) -- (-1.266,-0.715) -- (-1.206,-0.842) -- (-1.133,-0.960) -- (-1.047,-1.067) -- (-0.948,-1.161) -- (-0.840,-1.243) -- (-0.721,-1.310) -- (-0.595,-1.362) -- (-0.461,-1.400) -- (-0.323,-1.421) -- (-0.180,-1.426) -- (-0.036,-1.415) -- (0.108,-1.388) -- (0.252,-1.345) -- (0.392,-1.287) -- (0.529,-1.214) -- (0.659,-1.128) -- (0.781,-1.029) -- (0.895,-0.918) -- (0.999,-0.797) -- (1.091,-0.667) -- (1.171,-0.529) -- (1.238,-0.385) -- (1.291,-0.237) -- (1.329,-0.086) -- (1.352,0.066) -- (1.359,0.217);
\draw[->,vcol,very thick] (0.000,0.000) -- (0.075,0.903);
\node[vcol,font=\scriptsize] at (0.089,1.066) {$\mathbf{v}_1$};
\draw[->,ucol,very thick] (0.000,0.000) -- (0.683,1.044);
\node[ucol,font=\scriptsize] at (0.806,1.232) {$\mathbf{v}_2$};
\draw[->,gcol,very thick] (0.000,0.000) -- (1.333,-0.586);
\node[gcol,font=\scriptsize] at (1.573,-0.692) {$\mathbf{v}_3$};
\node[gcol,font=\scriptsize] at (0.000,-1.900) {unit sphere};
\node[gcol,font=\scriptsize] at (0.000,-2.220) {preimages $\mathbf{v}_i$ (orthonormal)};
\node at (3.0,0) {$\xmapsto{\;A\;}$};
\draw[gcol!30,thin] (6.498,0.596) -- (6.577,0.628) -- (6.649,0.653) -- (6.713,0.670) -- (6.770,0.679) -- (6.818,0.681) -- (6.856,0.675) -- (6.885,0.662) -- (6.904,0.641) -- (6.912,0.612) -- (6.910,0.577) -- (6.898,0.535) -- (6.876,0.488) -- (6.844,0.434) -- (6.802,0.376) -- (6.751,0.313) -- (6.692,0.247) -- (6.624,0.179) -- (6.550,0.108) -- (6.470,0.036) -- (6.384,-0.037) -- (6.294,-0.109) -- (6.200,-0.180) -- (6.104,-0.249) -- (6.007,-0.315) -- (5.910,-0.377) -- (5.814,-0.435) -- (5.720,-0.489) -- (5.629,-0.536) -- (5.543,-0.578) -- (5.462,-0.613) -- (5.386,-0.641) -- (5.318,-0.662) -- (5.257,-0.675) -- (5.205,-0.681) -- (5.162,-0.679) -- (5.128,-0.669) -- (5.105,-0.652) -- (5.091,-0.627) -- (5.088,-0.596) -- (5.094,-0.557) -- (5.112,-0.512) -- (5.139,-0.462) -- (5.176,-0.406) -- (5.223,-0.345) -- (5.278,-0.281) -- (5.341,-0.213) -- (5.412,-0.143) -- (5.489,-0.072) -- (5.573,0.001) -- (5.661,0.073) -- (5.753,0.145) -- (5.848,0.215) -- (5.944,0.282) -- (6.041,0.347) -- (6.138,0.407) -- (6.233,0.463) -- (6.326,0.513) -- (6.414,0.558) -- (6.498,0.596);
\draw[gcol!30,thin] (6.498,0.596) -- (6.471,0.666) -- (6.438,0.728) -- (6.400,0.782) -- (6.358,0.827) -- (6.311,0.863) -- (6.262,0.889) -- (6.209,0.904) -- (6.153,0.910) -- (6.096,0.905) -- (6.038,0.890) -- (5.980,0.865) -- (5.921,0.830) -- (5.864,0.786) -- (5.808,0.733) -- (5.754,0.671) -- (5.703,0.602) -- (5.656,0.526) -- (5.612,0.444) -- (5.573,0.357) -- (5.539,0.266) -- (5.509,0.172) -- (5.486,0.076) -- (5.468,-0.020) -- (5.456,-0.117) -- (5.451,-0.212) -- (5.451,-0.305) -- (5.458,-0.394) -- (5.471,-0.479) -- (5.490,-0.559) -- (5.515,-0.632) -- (5.545,-0.698) -- (5.580,-0.756) -- (5.620,-0.806) -- (5.665,-0.846) -- (5.713,-0.877) -- (5.765,-0.898) -- (5.819,-0.908) -- (5.875,-0.909) -- (5.933,-0.899) -- (5.991,-0.879) -- (6.050,-0.849) -- (6.107,-0.809) -- (6.164,-0.760) -- (6.219,-0.703) -- (6.271,-0.638) -- (6.321,-0.565) -- (6.366,-0.486) -- (6.408,-0.401) -- (6.445,-0.312) -- (6.477,-0.220) -- (6.503,-0.125) -- (6.524,-0.028) -- (6.539,0.069) -- (6.547,0.165) -- (6.550,0.259) -- (6.546,0.350) -- (6.536,0.438) -- (6.520,0.520) -- (6.498,0.596);
\draw[gcol!30,thin] (6.764,0.329) -- (6.735,0.401) -- (6.698,0.467) -- (6.653,0.529) -- (6.600,0.584) -- (6.541,0.633) -- (6.475,0.674) -- (6.404,0.708) -- (6.329,0.734) -- (6.249,0.752) -- (6.167,0.761) -- (6.083,0.761) -- (5.998,0.753) -- (5.913,0.736) -- (5.829,0.711) -- (5.747,0.678) -- (5.668,0.638) -- (5.593,0.589) -- (5.522,0.535) -- (5.457,0.474) -- (5.397,0.408) -- (5.345,0.337) -- (5.300,0.263) -- (5.263,0.185) -- (5.234,0.105) -- (5.214,0.025) -- (5.203,-0.057) -- (5.201,-0.137) -- (5.208,-0.216) -- (5.224,-0.292) -- (5.249,-0.366) -- (5.282,-0.435) -- (5.324,-0.499) -- (5.373,-0.557) -- (5.429,-0.609) -- (5.491,-0.655) -- (5.560,-0.692) -- (5.633,-0.722) -- (5.711,-0.744) -- (5.792,-0.757) -- (5.875,-0.762) -- (5.959,-0.758) -- (6.044,-0.746) -- (6.129,-0.725) -- (6.212,-0.696) -- (6.293,-0.659) -- (6.370,-0.614) -- (6.443,-0.563) -- (6.511,-0.505) -- (6.574,-0.442) -- (6.630,-0.373) -- (6.678,-0.300) -- (6.719,-0.224) -- (6.752,-0.145) -- (6.777,-0.065) -- (6.792,0.016) -- (6.799,0.097) -- (6.796,0.177) -- (6.785,0.255) -- (6.764,0.329);
\draw[->,vcol,very thick] (6.000,0.000) -- (6.801,0.747);
\node[vcol,font=\scriptsize] at (6.921,0.859) {$\sigma_1\mathbf{u}_1$};
\draw[->,ucol,very thick] (6.000,0.000) -- (5.607,0.607);
\node[ucol,font=\scriptsize] at (5.548,0.698) {$\sigma_2\mathbf{u}_2$};
\draw[->,gcol,very thick] (6.000,0.000) -- (6.301,0.098);
\node[gcol,font=\scriptsize] at (6.346,0.113) {$\sigma_3\mathbf{u}_3$};
\node[gcol,font=\scriptsize] at (6.000,-2.000) {image ellipsoid};
\node[gcol,font=\scriptsize] at (6.000,-2.320) {axes $\sigma_i\mathbf{u}_i$ (orthogonal)};
\end{tikzpicture}
\caption{\textbf{The three-dimensional experiment.} The unit sphere (left) maps
under $A = \left(\protect\begin{smallmatrix} 1 & 2 & 0 \\ 0 & 1 & 1 \\ 1 & 0 & 1\protect\end{smallmatrix}\right)$
to an ellipsoid (right). The ellipsoid's three axes $\sigma_i\uu i$ are mutually
perpendicular; their preimages $\vv i$ --- the input directions $A$ sends to
those axes --- form an orthonormal triple. The phenomenon of \S\ref{sec:evidence}
survives intact into three dimensions.}
\label{fig:threed}
\end{figure}

So the pattern is real and dimension-independent, but the \emph{proof} was not.
The rotating-frame argument was thoroughly planar: it swept a single angle and
watched a scalar change sign. In $\R^3$ a frame is not fixed by one angle, and
in $\R^n$ an orthonormal frame has $n(n-1)/2$ parameters while its images must
satisfy $n(n-1)/2$ orthogonality conditions --- a coupled system, not a single
equation, and the intermediate value theorem has nothing to say about it. Fixing
up one pair disturbs another. A new idea is needed to carry the planar success
upward, and the experiments hint at where to find it: the axes came in
\emph{order}, a longest and a shortest, and the longest is where the image
stretches most. Perhaps maximizing the stretch is the handle.

\subsection{From two dimensions to \texorpdfstring{$n$}{n}}
\label{sec:variational}

The plan is to establish, \emph{in the plane where both sides can be seen}, that
the surviving frame and the direction of maximum stretch are the same object;
then to notice that one half of that equivalence never used the dimension; then
to recurse on it, peeling off the longest axis at each step as the experiments
suggest.

\paragraph{One thing the sweep assumes.} Recall that in $f(\theta)$ the input
arms are kept perpendicular by construction --- the domain-side right angle is a
constraint we \emph{impose}, and the whole question is whether the image-side
right angle can be made to hold at the same time. We are not deriving that the
frame is orthonormal; we are searching, among orthonormal input frames, for one
whose images are orthonormal too.

\subsubsection*{In the plane: the crossing is the peak}

So far $f$ has been the only thing measured. But Figure~\ref{fig:sweep} shows
something else changing as $\theta$ turns: the images grow and shrink. At
$\theta = -80^\circ$ the first arm's image is short; by $\theta = 34^\circ$ it
has reached the ellipse's long axis. That second quantity is the one the reader
already cares about --- $\sigma_1$ was defined as a \emph{length}, not as an
angle. So track it:
\begin{equation}g(\theta) \;=\; \|A\vv1(\theta)\|^2 ,
\end{equation}
the squared length of the first arm's image. The square is a convenience:
differentiating $\|A\vv1\|$ itself produces a square root in the denominator,
while the squared version stays polynomial. Nothing is lost, because squaring is
monotone on nonnegative numbers --- whatever $\theta$ maximizes $g$ maximizes
$\|A\vv1\|$ too.

Figure~\ref{fig:stretch} sweeps the same frame again, watching $g$ instead of
$f$. The two crossings of $f$ sit exactly at the peak and the trough of $g$.
That is not a coincidence, and one observation explains it.

\begin{figure}[htbp]
\centering
\begin{tikzpicture}[scale=1, >={Stealth[length=1.3mm]}]
\draw[gcol!45, thin] (0.660,0.000) circle (0.520);
\draw[->, vcol, thick] (0.660,0.000) -- (0.750,-0.512);
\node[gcol, font=\tiny] at (1.600,0.000) {$\mapsto$};
\draw[gcol!30, thin, rotate around={10.900:(2.550,0.000)}] (2.550,0.000) ellipse (0.677 and 0.465);
\draw[->, vcol, thick] (2.550,0.000) -- (2.359,-0.469);
\node[gcol, font=\scriptsize] at (0.660,-0.950) {$\theta=-80^\circ$};
\node[gcol, font=\scriptsize] at (2.550,-0.950) {$\|A\mathbf{v}_1\|=1.69$};
\draw[gcol!45, thin] (4.960,0.000) circle (0.520);
\draw[->, vcol, thick] (4.960,0.000) -- (5.449,-0.178);
\node[gcol, font=\tiny] at (5.900,0.000) {$\mapsto$};
\draw[gcol!30, thin, rotate around={10.900:(6.850,0.000)}] (6.850,0.000) ellipse (0.677 and 0.465);
\draw[->, vcol, thick] (6.850,0.000) -- (7.311,-0.295);
\node[gcol, font=\scriptsize] at (4.960,-0.950) {$\theta=-20^\circ$};
\node[gcol, font=\scriptsize] at (6.850,-0.950) {$\|A\mathbf{v}_1\|=1.82$};
\draw[gcol!45, thin] (9.260,0.000) circle (0.520);
\draw[->, vcol, thick] (9.260,0.000) -- (9.780,0.000);
\node[gcol, font=\tiny] at (10.200,0.000) {$\mapsto$};
\draw[gcol!30, thin, rotate around={10.900:(11.150,0.000)}] (11.150,0.000) ellipse (0.677 and 0.465);
\draw[->, vcol, thick] (11.150,0.000) -- (11.750,-0.150);
\node[gcol, font=\scriptsize] at (9.260,-0.950) {$\theta=0^\circ$};
\node[gcol, font=\scriptsize] at (11.150,-0.950) {$\|A\mathbf{v}_1\|=2.06$};
\draw[gcol!45, thin] (0.660,-3.150) circle (0.520);
\draw[->, vcol, thick] (0.660,-3.150) -- (1.149,-2.972);
\node[gcol, font=\tiny] at (1.600,-3.150) {$\mapsto$};
\draw[gcol!30, thin, rotate around={10.900:(2.550,-3.150)}] (2.550,-3.150) ellipse (0.677 and 0.465);
\draw[->, vcol, thick] (2.550,-3.150) -- (3.216,-3.137);
\node[gcol, font=\scriptsize] at (0.660,-4.100) {$\theta=20^\circ$};
\node[gcol, font=\scriptsize] at (2.550,-4.100) {$\|A\mathbf{v}_1\|=2.22$};
\draw[ucol, line width=0.8pt, rounded corners=2pt] (4.240,-4.370) rectangle (8.320,-1.990);
\draw[gcol!45, thin] (4.960,-3.150) circle (0.520);
\draw[->, vcol, thick] (4.960,-3.150) -- (5.391,-2.858);
\node[gcol, font=\tiny] at (5.900,-3.150) {$\mapsto$};
\draw[gcol!30, thin, rotate around={10.900:(6.850,-3.150)}] (6.850,-3.150) ellipse (0.677 and 0.465);
\draw[->, vcol, thick] (6.850,-3.150) -- (7.515,-3.022);
\node[gcol, font=\scriptsize] at (4.960,-4.100) {$\theta=34.1^\circ$};
\node[ucol, font=\scriptsize] at (6.850,-4.100) {$\|A\mathbf{v}_1\|=\sigma_1=2.26$};
\draw[ucol, line width=0.8pt, rounded corners=2pt] (8.540,-4.370) rectangle (12.620,-1.990);
\draw[gcol!45, thin] (9.260,-3.150) circle (0.520);
\draw[->, vcol, thick] (9.260,-3.150) -- (9.552,-3.581);
\node[gcol, font=\tiny] at (10.200,-3.150) {$\mapsto$};
\draw[gcol!30, thin, rotate around={10.900:(11.150,-3.150)}] (11.150,-3.150) ellipse (0.677 and 0.465);
\draw[->, vcol, thick] (11.150,-3.150) -- (11.238,-3.607);
\node[gcol, font=\scriptsize] at (9.260,-4.100) {$\theta=-55.9^\circ$};
\node[ucol, font=\scriptsize] at (11.150,-4.100) {$\|A\mathbf{v}_1\|=\sigma_2=1.55$};
\end{tikzpicture}
\vspace{0.9em}

\begin{tikzpicture}[scale=1, >={Stealth[length=1.4mm]}]
\draw[gcol!50, thin] (-3.990,0) -- (4.200,0);
\draw[gcol!50, thin] (0,-1.000) -- (0,1.000);
\draw[vcol, thick] (-3.780,-0.250) -- (-3.738,-0.272) -- (-3.696,-0.293) -- (-3.654,-0.314) -- (-3.612,-0.335) -- (-3.570,-0.355) -- (-3.528,-0.374) -- (-3.486,-0.394) -- (-3.444,-0.413) -- (-3.402,-0.431) -- (-3.360,-0.449) -- (-3.318,-0.466) -- (-3.276,-0.483) -- (-3.234,-0.499) -- (-3.192,-0.514) -- (-3.150,-0.529) -- (-3.108,-0.543) -- (-3.066,-0.557) -- (-3.024,-0.570) -- (-2.982,-0.582) -- (-2.940,-0.593) -- (-2.898,-0.604) -- (-2.856,-0.614) -- (-2.814,-0.623) -- (-2.772,-0.632) -- (-2.730,-0.639) -- (-2.688,-0.646) -- (-2.646,-0.653) -- (-2.604,-0.658) -- (-2.562,-0.663) -- (-2.520,-0.666) -- (-2.478,-0.669) -- (-2.436,-0.671) -- (-2.394,-0.673) -- (-2.352,-0.673) -- (-2.310,-0.673) -- (-2.268,-0.672) -- (-2.226,-0.670) -- (-2.184,-0.667) -- (-2.142,-0.663) -- (-2.100,-0.659) -- (-2.058,-0.654) -- (-2.016,-0.648) -- (-1.974,-0.641) -- (-1.932,-0.633) -- (-1.890,-0.625) -- (-1.848,-0.616) -- (-1.806,-0.606) -- (-1.764,-0.595) -- (-1.722,-0.584) -- (-1.680,-0.572) -- (-1.638,-0.559) -- (-1.596,-0.546) -- (-1.554,-0.532) -- (-1.512,-0.517) -- (-1.470,-0.502) -- (-1.428,-0.486) -- (-1.386,-0.469) -- (-1.344,-0.452) -- (-1.302,-0.434) -- (-1.260,-0.416) -- (-1.218,-0.398) -- (-1.176,-0.378) -- (-1.134,-0.359) -- (-1.092,-0.339) -- (-1.050,-0.318) -- (-1.008,-0.297) -- (-0.966,-0.276) -- (-0.924,-0.254) -- (-0.882,-0.232) -- (-0.840,-0.210) -- (-0.798,-0.188) -- (-0.756,-0.165) -- (-0.714,-0.142) -- (-0.672,-0.119) -- (-0.630,-0.096) -- (-0.588,-0.073) -- (-0.546,-0.049) -- (-0.504,-0.026) -- (-0.462,-0.002) -- (-0.420,0.021) -- (-0.378,0.045) -- (-0.336,0.068) -- (-0.294,0.091) -- (-0.252,0.115) -- (-0.210,0.138) -- (-0.168,0.161) -- (-0.126,0.183) -- (-0.084,0.206) -- (-0.042,0.228) -- (0.000,0.250) -- (0.042,0.272) -- (0.084,0.293) -- (0.126,0.314) -- (0.168,0.335) -- (0.210,0.355) -- (0.252,0.374) -- (0.294,0.394) -- (0.336,0.413) -- (0.378,0.431) -- (0.420,0.449) -- (0.462,0.466) -- (0.504,0.483) -- (0.546,0.499) -- (0.588,0.514) -- (0.630,0.529) -- (0.672,0.543) -- (0.714,0.557) -- (0.756,0.570) -- (0.798,0.582) -- (0.840,0.593) -- (0.882,0.604) -- (0.924,0.614) -- (0.966,0.623) -- (1.008,0.632) -- (1.050,0.639) -- (1.092,0.646) -- (1.134,0.653) -- (1.176,0.658) -- (1.218,0.663) -- (1.260,0.666) -- (1.302,0.669) -- (1.344,0.671) -- (1.386,0.673) -- (1.428,0.673) -- (1.470,0.673) -- (1.512,0.672) -- (1.554,0.670) -- (1.596,0.667) -- (1.638,0.663) -- (1.680,0.659) -- (1.722,0.654) -- (1.764,0.648) -- (1.806,0.641) -- (1.848,0.633) -- (1.890,0.625) -- (1.932,0.616) -- (1.974,0.606) -- (2.016,0.595) -- (2.058,0.584) -- (2.100,0.572) -- (2.142,0.559) -- (2.184,0.546) -- (2.226,0.532) -- (2.268,0.517) -- (2.310,0.502) -- (2.352,0.486) -- (2.394,0.469) -- (2.436,0.452) -- (2.478,0.434) -- (2.520,0.416) -- (2.562,0.398) -- (2.604,0.378) -- (2.646,0.359) -- (2.688,0.339) -- (2.730,0.318) -- (2.772,0.297) -- (2.814,0.276) -- (2.856,0.254) -- (2.898,0.232) -- (2.940,0.210) -- (2.982,0.188) -- (3.024,0.165) -- (3.066,0.142) -- (3.108,0.119) -- (3.150,0.096) -- (3.192,0.073) -- (3.234,0.049) -- (3.276,0.026) -- (3.318,0.002) -- (3.360,-0.021) -- (3.402,-0.045) -- (3.444,-0.068) -- (3.486,-0.091) -- (3.528,-0.115) -- (3.570,-0.138) -- (3.612,-0.161) -- (3.654,-0.183) -- (3.696,-0.206) -- (3.738,-0.228) -- (3.780,-0.250);
\draw[ucol!75, thin, dashed] (-3.780,-0.388) -- (-3.738,-0.382) -- (-3.696,-0.376) -- (-3.654,-0.369) -- (-3.612,-0.362) -- (-3.570,-0.355) -- (-3.528,-0.347) -- (-3.486,-0.338) -- (-3.444,-0.330) -- (-3.402,-0.321) -- (-3.360,-0.311) -- (-3.318,-0.301) -- (-3.276,-0.291) -- (-3.234,-0.280) -- (-3.192,-0.269) -- (-3.150,-0.258) -- (-3.108,-0.246) -- (-3.066,-0.235) -- (-3.024,-0.222) -- (-2.982,-0.210) -- (-2.940,-0.197) -- (-2.898,-0.184) -- (-2.856,-0.171) -- (-2.814,-0.158) -- (-2.772,-0.144) -- (-2.730,-0.130) -- (-2.688,-0.116) -- (-2.646,-0.102) -- (-2.604,-0.088) -- (-2.562,-0.074) -- (-2.520,-0.060) -- (-2.478,-0.045) -- (-2.436,-0.031) -- (-2.394,-0.016) -- (-2.352,-0.001) -- (-2.310,0.013) -- (-2.268,0.028) -- (-2.226,0.042) -- (-2.184,0.057) -- (-2.142,0.071) -- (-2.100,0.085) -- (-2.058,0.100) -- (-2.016,0.114) -- (-1.974,0.128) -- (-1.932,0.141) -- (-1.890,0.155) -- (-1.848,0.168) -- (-1.806,0.182) -- (-1.764,0.195) -- (-1.722,0.207) -- (-1.680,0.220) -- (-1.638,0.232) -- (-1.596,0.244) -- (-1.554,0.256) -- (-1.512,0.267) -- (-1.470,0.278) -- (-1.428,0.289) -- (-1.386,0.299) -- (-1.344,0.309) -- (-1.302,0.319) -- (-1.260,0.328) -- (-1.218,0.337) -- (-1.176,0.345) -- (-1.134,0.353) -- (-1.092,0.361) -- (-1.050,0.368) -- (-1.008,0.374) -- (-0.966,0.381) -- (-0.924,0.386) -- (-0.882,0.392) -- (-0.840,0.396) -- (-0.798,0.401) -- (-0.756,0.405) -- (-0.714,0.408) -- (-0.672,0.411) -- (-0.630,0.413) -- (-0.588,0.415) -- (-0.546,0.416) -- (-0.504,0.417) -- (-0.462,0.417) -- (-0.420,0.417) -- (-0.378,0.416) -- (-0.336,0.415) -- (-0.294,0.413) -- (-0.252,0.411) -- (-0.210,0.409) -- (-0.168,0.405) -- (-0.126,0.402) -- (-0.084,0.397) -- (-0.042,0.393) -- (0.000,0.388) -- (0.042,0.382) -- (0.084,0.376) -- (0.126,0.369) -- (0.168,0.362) -- (0.210,0.355) -- (0.252,0.347) -- (0.294,0.338) -- (0.336,0.330) -- (0.378,0.321) -- (0.420,0.311) -- (0.462,0.301) -- (0.504,0.291) -- (0.546,0.280) -- (0.588,0.269) -- (0.630,0.258) -- (0.672,0.246) -- (0.714,0.235) -- (0.756,0.222) -- (0.798,0.210) -- (0.840,0.197) -- (0.882,0.184) -- (0.924,0.171) -- (0.966,0.158) -- (1.008,0.144) -- (1.050,0.130) -- (1.092,0.116) -- (1.134,0.102) -- (1.176,0.088) -- (1.218,0.074) -- (1.260,0.060) -- (1.302,0.045) -- (1.344,0.031) -- (1.386,0.016) -- (1.428,0.001) -- (1.470,-0.013) -- (1.512,-0.028) -- (1.554,-0.042) -- (1.596,-0.057) -- (1.638,-0.071) -- (1.680,-0.085) -- (1.722,-0.100) -- (1.764,-0.114) -- (1.806,-0.128) -- (1.848,-0.141) -- (1.890,-0.155) -- (1.932,-0.168) -- (1.974,-0.182) -- (2.016,-0.195) -- (2.058,-0.207) -- (2.100,-0.220) -- (2.142,-0.232) -- (2.184,-0.244) -- (2.226,-0.256) -- (2.268,-0.267) -- (2.310,-0.278) -- (2.352,-0.289) -- (2.394,-0.299) -- (2.436,-0.309) -- (2.478,-0.319) -- (2.520,-0.328) -- (2.562,-0.337) -- (2.604,-0.345) -- (2.646,-0.353) -- (2.688,-0.361) -- (2.730,-0.368) -- (2.772,-0.374) -- (2.814,-0.381) -- (2.856,-0.386) -- (2.898,-0.392) -- (2.940,-0.396) -- (2.982,-0.401) -- (3.024,-0.405) -- (3.066,-0.408) -- (3.108,-0.411) -- (3.150,-0.413) -- (3.192,-0.415) -- (3.234,-0.416) -- (3.276,-0.417) -- (3.318,-0.417) -- (3.360,-0.417) -- (3.402,-0.416) -- (3.444,-0.415) -- (3.486,-0.413) -- (3.528,-0.411) -- (3.570,-0.409) -- (3.612,-0.405) -- (3.654,-0.402) -- (3.696,-0.397) -- (3.738,-0.393) -- (3.780,-0.388);
\fill[ucol] (1.432,0.673) circle (2.1pt);
\draw[ucol!55, thin, dotted] (1.432,0.673) -- (1.432,0);
\fill[ucol] (1.432,0) circle (1.5pt);
\fill[ucol] (-2.348,-0.673) circle (2.1pt);
\draw[ucol!55, thin, dotted] (-2.348,-0.673) -- (-2.348,0);
\fill[ucol] (-2.348,0) circle (1.5pt);
\node[ucol, font=\scriptsize, anchor=south] at (1.432,0.733) {$g=\sigma_1^2$};
\node[ucol, font=\scriptsize, anchor=north] at (-2.348,-0.733) {$g=\sigma_2^2$};
\node[vcol, font=\scriptsize, anchor=west] at (3.900,0.180) {$g(\theta)=\|A\mathbf{v}_1\|^2$};
\node[ucol!75, font=\scriptsize, anchor=west] at (3.900,-0.550) {$f(\theta)$};
\node[gcol, font=\scriptsize] at (-3.780,-0.20) {$-90^\circ$};
\node[gcol, font=\scriptsize] at (3.780,-0.20) {$90^\circ$};
\end{tikzpicture}
\caption{\textbf{The same sweep, watching length instead of angle.} Each panel
shows $\vv1$ at angle $\theta$ and its image $A\vv1$, with the image ellipse
faint behind it. Since $\vv1$ is a unit vector its image always lands
\emph{on} the ellipse; what changes is how far around, and so how long the arrow
is. The first five panels climb --- $1.69, 1.82, 2.06, 2.22$ --- until at
$\theta = 34.10^\circ$ the arrow reaches the long axis and $\|A\vv1\| = \sigma_1$.
The last panel jumps to $\theta = -55.90^\circ$, the other extremum, where the
arrow sits on the short axis and $\|A\vv1\| = \sigma_2$. Below,
$g(\theta) = \|A\vv1\|^2$ (solid) is plotted against $f(\theta)$ (dashed, the
same curve as in Figure~\ref{fig:sweep}). The zeros of $f$ fall exactly at the
peak and trough of $g$ --- which is what $dg/d\theta = 2f$ says: $f$ is the
slope of $g$.}
\label{fig:stretch}
\end{figure}

\paragraph{A tool: differentiating an inner product.} The derivation below, and
the one in the converse argument, both rest on a single calculus fact --- the
product rule for inner products. For any two vector-valued functions $u(t)$ and
$w(t)$,
\begin{equation}
\frac{d}{dt}\ip{u(t)}{w(t)} = \ip{u'(t)}{w(t)} + \ip{u(t)}{w'(t)} ,
\end{equation}
which is the ordinary product rule applied coordinatewise and summed, using the
bilinearity of the inner product. Two special cases are all we need. When the two
slots hold the same function, symmetry of the inner product merges the terms:
\begin{equation}
\frac{d}{dt}\ip{u}{u} = 2\ip{u'}{u} .
\end{equation}
And when a constant matrix $A$ sits inside, it commutes with the derivative,
since differentiation acts entrywise and $A$ does not depend on $t$:
$\frac{d}{dt}(Au) = A u'$.

\paragraph{The identity.} As $\theta$ increases, which way does $\vv1$ move? It
rotates --- and rotating $\vv1$ infinitesimally pushes it toward $\vv2$, which
is a quarter turn ahead. In symbols, with $\vv1 = (\cos\theta, \sin\theta)$,
\begin{equation}\frac{d \vv1}{d\theta} = (-\sin\theta,\, \cos\theta) = \vv2 .
\end{equation}
This is the fact that drove the sign flip in \S\ref{sec:planar}, seen
infinitesimally rather than a quarter turn at a time. Now differentiate
$g = \ip{A\vv1}{A\vv1}$, taking the steps one at a time:
\begin{align}
\frac{dg}{d\theta}
  &= \frac{d}{d\theta}\ip{A\vv1}{A\vv1}
     && \text{definition of } g \nonumber\\
  &= 2\Bip{\tfrac{d}{d\theta}(A\vv1)}{A\vv1}
     && \text{same-slot product rule} \nonumber\\
  &= 2\Bip{A\tfrac{d\vv1}{d\theta}}{A\vv1}
     && \text{$A$ constant, so it passes the derivative} \nonumber\\
  &= 2\ip{A\vv2}{A\vv1}
     && \text{using } \tfrac{d\vv1}{d\theta} = \vv2 \nonumber\\
  &= 2 f(\theta)
     && \text{definition of } f .
\end{align}
So
\begin{equation}\boxed{\;\frac{dg}{d\theta} \;=\; 2 f(\theta)\;}
\end{equation}
The damage function is \emph{half the derivative of the stretch}. Nothing was
computed: the identity holds because differentiating the first arm produces the
second, and pairing the second against the first is what $f$ already measures.

This settles what the sweep was showing. The zero crossing of
Figure~\ref{fig:sweep} is not merely correlated with the ellipse's axis --- it
\emph{is} the critical point of $g$. Where the right angle survives, the stretch
is stationary; where the stretch peaks, the right angle survives. The sinusoid
plotted beneath the film strip is, up to a factor of two, the derivative of the
stretch curve in Figure~\ref{fig:stretch}.

That is the equivalence, and both directions are worth having explicitly.

\paragraph{Maximality $\Rightarrow$ orthogonal images.} This is the direction
that will carry the construction into higher dimensions, so it is worth stating
carefully. The claim: \emph{if $\vv1$ is the input direction whose image is
longest, then $A\vv1$ is automatically orthogonal to $Aw$ for every $w$
perpendicular to $\vv1$.} Perpendicularity of the images is not arranged; it is
a consequence of maximizing a length.

The mechanism is the first-derivative test, applied on the sphere. We want to
maximize $\|Av\|$ subject to $\|v\| = 1$, so we cannot vary $v$ freely --- any
admissible variation must keep $v$ on the unit sphere. Fix a maximizer $\vv1$
and a direction $w \perp \vv1$ to test, and move along the arc
\begin{equation}v(t) \;=\; \cos t\, \vv1 + \sin t \, w ,
\end{equation}
chosen for two reasons: it satisfies $v(0) = \vv1$, so $t = 0$ is the point we
care about, and it stays on the sphere, since
$\|v(t)\|^2 = \cos^2 t + \sin^2 t = 1$ (the cross term drops because
$\ip{\vv1}{w} = 0$). As $t$ moves away from $0$ it tips $\vv1$ toward $w$; so
$\frac{d}{dt}$ at $t=0$ is the directional derivative of the objective along
$w$, and \emph{that} is the quantity a maximum forces to vanish.

Restrict the objective to the arc and expand by bilinearity of the inner
product:
\begin{equation}\|Av(t)\|^2
= \|A\vv1\|^2\cos^2 t
\;+\; 2\,\ip{A\vv1}{Aw}\,\sin t\cos t
\;+\; \|Aw\|^2 \sin^2 t .
\end{equation}
Differentiating, the $\cos^2 t$ and $\sin^2 t$ terms contribute
$\mp \sin 2t$, which vanish at $t = 0$; the cross term contributes
$2\ip{A\vv1}{Aw}\cos 2t$, equal to $2\ip{A\vv1}{Aw}$ there. Hence
\begin{equation}\left.\frac{d}{dt}\right|_{t=0} \|Av(t)\|^2 \;=\; 2\,\ip{A\vv1}{Aw} .
\end{equation}
Since $\vv1$ maximizes the objective over the sphere and the arc lies on the
sphere with $v(0) = \vv1$, the function $t \mapsto \|Av(t)\|^2$ has an interior
maximum at $t = 0$, so this derivative is zero:
\begin{equation}\ip{A\vv1}{Aw} = 0 \qquad \text{for every } w \perp \vv1 .
\end{equation}

What the algebra encodes is geometric. The derivative of the stretch as you tip
$\vv1$ toward $w$ is exactly (twice) the overlap $\ip{A\vv1}{Aw}$. If that
overlap were nonzero, the stretch would have nonzero slope in the $w$ direction,
and a small tip would lengthen the image --- impossible, because $\vv1$ was
chosen to make it longest. So maximality \emph{is} vanishing overlap in every
perpendicular direction, which is to say $A\vv1$ is orthogonal to the image of
the entire hyperplane $\vv1^\perp$. The second-derivative test is consistent and
needs no separate check: $\left.\frac{d^2}{dt^2}\right|_0 = 2(\|Aw\|^2 - \|A\vv1\|^2) \leq 0$
precisely because $\|A\vv1\|$ is the largest stretch there is.

\paragraph{Orthogonal images $\Rightarrow$ maximality.} The converse closes the
loop. Its claim: \emph{take the surviving frame --- a perpendicular input pair
$\vv1, \vv2$ whose images are also perpendicular --- and the arm with the longer
image is the direction of maximum stretch.} Where the previous direction started
from a maximizer and produced orthogonality, this one starts from orthogonality
and recovers the maximizer.

Note what is assumed. We take $\vv1, \vv2$ orthonormal in the domain (that is
the frame we impose) \emph{and} their images orthogonal, $\ip{A\vv1}{A\vv2} = 0$
(that is the surviving property). Label the arms so that the first has the
longer image, and write $\sigma_1 = \|A\vv1\| \geq \sigma_2 = \|A\vv2\|$.

Because $\vv1, \vv2$ are orthonormal, they are a basis of the plane, so every
unit input vector is a combination
\begin{equation}v = c_1 \vv1 + c_2 \vv2, \qquad c_1^2 + c_2^2 = 1 ,
\end{equation}
and its image length is governed by how the weight is split. Expanding
$\|Av\|^2$ and using the orthogonality of the images to kill the cross term:
\begin{equation}\|Av\|^2 = \|c_1 A\vv1 + c_2 A\vv2\|^2
= c_1^2 \|A\vv1\|^2 + \underbrace{2c_1c_2\ip{A\vv1}{A\vv2}}_{=\,0} + c_2^2\|A\vv2\|^2
= c_1^2\sigma_1^2 + c_2^2\sigma_2^2 .
\end{equation}
The result is a weighted average of $\sigma_1^2$ and $\sigma_2^2$, with weights
$c_1^2, c_2^2$ that sum to one. An average of two numbers cannot exceed the
larger, and reaches it only by placing all the weight there: $c_1^2 = 1$,
$c_2^2 = 0$, that is $v = \pm\vv1$. So among all unit inputs the longest image
belongs to $\vv1$ --- it is the maximizer.

This is why the two planar descriptions are one. The surviving frame's long arm
$\vv1$ is where the stretch peaks, and $\sigma_1$, defined a moment ago merely as
``the longer of the two image lengths,'' is in fact the largest image length
there is --- the major semi-axis of the ellipse. The step that made it work,
$\|Av\|^2 = c_1^2\sigma_1^2 + c_2^2\sigma_2^2$, is worth remembering: once the
images are orthogonal, the map acts on the $\vv1,\vv2$ coordinates by simple
scaling, with no interference between the arms. That decoupling is the whole
point of the frame, and it is the $2\times 2$ shadow of $A = U\Sigma V\T$.

\subsubsection*{The hinge}

Now look back at the first of those two arguments and ask what it used. A unit
vector $\vv1$; a perpendicular unit vector $w$; the curve
$\cos t\,\vv1 + \sin t\,w$; one derivative. \emph{Nothing in it mentions the
dimension.} In $\R^2$ the only choice of $w$ is $\pm\vv2$, so the conclusion
looks like a statement about a single pair. In $\R^n$ the same curve lies on the
unit sphere $S^{n-1}$, the same derivative computation goes through unchanged,
and the conclusion becomes considerably stronger:
\begin{equation}\ip{A\vv1}{Aw} = 0 \quad\text{for \emph{every} } w \perp \vv1 ,
\end{equation}
which is $n-1$ orthogonality relations at once, not one. The maximizer's image
is orthogonal to the image of the entire hyperplane $\vv1^\perp$.

The other direction does not need to make the trip. It was there to show that
the two planar descriptions coincide; having established that, we keep only the
half that travels.

\subsubsection*{In \texorpdfstring{$\R^n$}{R\^{}n}: recursion}

The construction now runs by induction, and each step is the hinge applied once.

\begin{enumerate}\itemsep0.35em
\item The function $v \mapsto \|Av\|$ is continuous and $S^{n-1}$ is compact, so
a maximizer exists. Call it $\vv1$ and set $\sigma_1 = \|A\vv1\|$.
\item By the hinge, $\ip{A\vv1}{Aw} = 0$ for every $w \perp \vv1$. So
\emph{whatever} we choose next from $\vv1^\perp$, its image is already
orthogonal to $A\vv1$ --- for free, with no further work.
\item Restrict $A$ to the hyperplane $\vv1^\perp \cong \R^{n-1}$ and repeat:
maximize $\|Av\|$ over the unit sphere of $\vv1^\perp$ to get $\vv2$, with
$\sigma_2 = \|A\vv2\| \leq \sigma_1$. Applying the hinge \emph{within} that
hyperplane gives $\ip{A\vv2}{Aw} = 0$ for all $w \in \vv1^\perp$ with
$w \perp \vv2$.
\item Continue. After $n$ steps the frame $\vv1, \dots, \vv n$ is orthonormal by
construction, and its images are pairwise orthogonal: for $i < j$, step $i$'s
hinge applies because $\vv j$ lies in $\vv i^\perp$.
\end{enumerate}

\noindent
Two features of this deserve comment. First, the $\sigma_i$ come out
\emph{sorted}, $\sigma_1 \geq \sigma_2 \geq \dots$, automatically: each step
maximizes over a subset of the previous step's domain, so it cannot do better.
The descending convention is not a convention here --- it is what the
construction produces. Second, step 2 is doing all the work, and it is doing it
\emph{before} the next vector is chosen: maximality of $\vv1$ pre-emptively
guarantees orthogonality against every future choice. That is what makes the
recursion compose, and it is precisely what the planar picture could not show,
having no room to recurse into.

\subsubsection*{What the construction has already proved}
\label{sec:bridge}

Something has quietly happened. Return to the hinge, the statement that at the
maximizer
\begin{equation}\ip{A\vv1}{Aw} = 0 \qquad\text{for every } w \perp \vv1,
\end{equation}
and move $A$ across the inner product onto the other argument:
\begin{equation}\ip{A\vv1}{Aw}
\;=\; (A\vv1)\T (Aw)
\;=\; \vv1\T A\T A\, w
\;=\; \ip{A\T A\, \vv1}{\,w} .
\end{equation}
So the hinge says: \emph{the vector $A\T A \vv1$ is orthogonal to every $w$ in
the hyperplane $\vv1^\perp$.} But the only vectors orthogonal to all of
$\vv1^\perp$ are the multiples of $\vv1$ itself. Therefore
\begin{equation}\boxed{\;A\T A\, \vv1 \;=\; \lambda_1 \vv1\;}
\end{equation}
for some scalar $\lambda_1$ --- and pairing both sides with $\vv1$ identifies it,
\begin{equation}\lambda_1 = \ip{A\T A \vv1}{\vv1} = \ip{A\vv1}{A\vv1} = \|A\vv1\|^2 = \sigma_1^2 .
\end{equation}

\emph{The maximizer is an eigenvector of $A\T A$, with eigenvalue the squared
stretch.} This was not assumed. It fell out of maximality alone, by one
application of the adjoint move.

And the same holds at every step of the recursion, since each $\vv i$ maximizes
over a subspace on which the identical argument runs. So the construction of the
previous page did not merely find a convenient frame --- it produced an
orthonormal basis $\vv1, \dots, \vv n$ of eigenvectors of $A\T A$, with
nonnegative eigenvalues $\sigma_i^2$, sorted. That is a complete
diagonalization:
\begin{equation}A\T A = V \Sigma^2 V\T .
\end{equation}

\noindent
Which is to say: \emph{the variational\footnote{``Variational'' here is used in
the linear-algebra sense: an object found by making a quantity extremal ---
here, maximizing $\|Av\|$ over the unit sphere. This is the finite-dimensional
cousin of the \emph{calculus of variations} proper, which extremizes
\emph{functionals} over infinite-dimensional spaces of functions and produces
Euler--Lagrange differential equations. None of that machinery appears here; our
optimization lives on ordinary spheres in $\R^n$ and is settled by the
first-derivative test (equivalently, Lagrange multipliers).} route has already
proved the spectral theorem for $A\T A$.} The next section does not introduce new
machinery. It names this result in the standard language, states it for symmetric
matrices in general rather than for Gram matrices in particular, and notes that
once one has it, the entire construction collapses to a single sentence.

\paragraph{One caveat.} The route above is specific to matrices of the form
$A\T A$, because it maximizes $\|Ax\|^2$, a quantity that is automatically
nonnegative. A general symmetric $S$ can have negative eigenvalues --- for
instance $\left(\begin{smallmatrix} 2 & 1 \\ 1 & -3\end{smallmatrix}\right)$ has
eigenvalues $-3.19$ and $2.19$ --- so no maximization of a squared length will
ever produce them. The fix is to run the same argument on the \emph{Rayleigh
quotient}
\begin{equation}\rho(x) = \frac{x\T S x}{x\T x},
\end{equation}
whose stationary points are exactly the eigenvectors of $S$ and whose stationary
values are the eigenvalues. The proof is the same derivative computation; only
the functional changes. So the spectral theorem in full generality is this
section's argument with $\|Ax\|^2$ replaced by $\rho(x)$, and
$\rho(x) = \|Ax\|^2$ on the unit sphere when $S = A\T A$ recovers what we did.

\subsection{The same fact, algebraically}
\label{sec:algebraic}

The previous section arrived at $A\T A \vv i = \sigma_i^2 \vv i$ with the
$\vv i$ orthonormal. That statement has a name, and stating it for a general
symmetric matrix rather than for $A\T A$ costs nothing --- the hypothesis that
actually does the work is symmetry, not the particular form of the matrix.

\begin{quote}
\textbf{Spectral theorem.} \itshape Let $S$ be a real symmetric $n \times n$
matrix, $S = S\T$. Then $S$ has $n$ real eigenvalues $\lambda_1, \dots,
\lambda_n$ (with multiplicity), and $\R^n$ admits an \emph{orthonormal} basis
$\mathbf{w}_1, \dots, \mathbf{w}_n$ of eigenvectors, $S\mathbf{w}_i = \lambda_i
\mathbf{w}_i$. Equivalently $S = W \Lambda W\T$ with $W$ orthogonal and
$\Lambda$ diagonal.
\end{quote}

\noindent
Two clauses matter here and neither is free. \emph{Real} eigenvalues: a general
real matrix can have complex ones, as a rotation does. \emph{Orthonormal}
eigenbasis: a general matrix's eigenvectors need not be perpendicular, and a
defective matrix does not have a full set of them at all. Symmetry buys both.

Taking the theorem as given --- it is standard, and \S\ref{sec:bridge} sketched
its proof --- the whole of \S\ref{sec:existence} and \S\ref{sec:variational}
collapses to three lines, which is the point of this section. The rotating frame
and the greedy maximization were two ways of finding by hand what the following
computation supplies at once, in any dimension.

\subsubsection*{Where the symmetry enters}

The orthogonality half is short enough to see in full. Let $\mathbf{w}_i,
\mathbf{w}_j$ be eigenvectors with $\lambda_i \neq \lambda_j$. Start from the
number $\ip{S\mathbf{w}_i}{\mathbf{w}_j}$ and evaluate it two ways.

Pushing $S$ onto the left argument:
\begin{equation}\ip{S\mathbf{w}_i}{\mathbf{w}_j}
\;=\; \ip{\lambda_i \mathbf{w}_i}{\mathbf{w}_j}
\;=\; \lambda_i \ip{\mathbf{w}_i}{\mathbf{w}_j}
\qquad\text{(eigenvector equation, then linearity).}
\end{equation}
Moving $S$ across the inner product to the right argument instead:
\begin{equation}\ip{S\mathbf{w}_i}{\mathbf{w}_j}
\;=\; (S\mathbf{w}_i)\T \mathbf{w}_j
\;=\; \mathbf{w}_i\T S\T \mathbf{w}_j
\;=\; \underbrace{\mathbf{w}_i\T S \mathbf{w}_j}_{\text{here } S\T = S}
\;=\; \ip{\mathbf{w}_i}{S\mathbf{w}_j}
\;=\; \lambda_j \ip{\mathbf{w}_i}{\mathbf{w}_j}.
\end{equation}
The symmetry hypothesis is used exactly once, at the brace: without $S\T = S$
the chain breaks and nothing follows. Setting the two evaluations equal,
\begin{equation}\lambda_i \ip{\mathbf{w}_i}{\mathbf{w}_j} = \lambda_j \ip{\mathbf{w}_i}{\mathbf{w}_j}
\quad\Longrightarrow\quad
(\lambda_i - \lambda_j)\,\ip{\mathbf{w}_i}{\mathbf{w}_j} = 0 .
\end{equation}
Since $\lambda_i \neq \lambda_j$ by assumption, the first factor is nonzero, so
$\ip{\mathbf{w}_i}{\mathbf{w}_j} = 0$: the eigenvectors are perpendicular. (If
$\lambda_i = \lambda_j$ the argument says nothing, and correctly so --- within a
repeated eigenvalue's eigenspace any basis is an eigenbasis, so one simply
chooses an orthonormal one. This is the same freedom catalogued in
\S\ref{sec:uniqueness} for repeated singular values.)

\subsubsection*{Applying it to \texorpdfstring{$A\T A$}{A transpose A}}

First, $A\T A$ qualifies. It is symmetric because transposing reverses the
order of a product and undoes itself:
\begin{equation}(A\T A)\T \;=\; A\T \left(A\T\right)\T \;=\; A\T A .
\end{equation}
The matrix $A\T A$ is called the \textbf{Gram matrix} of $A$: its $(i,j)$ entry
is the inner product of column $i$ of $A$ with column $j$, so its diagonal
holds the squared column lengths and its off-diagonal entries measure how far
from perpendicular the columns are. (Throughout, $(M)_{ij}$ denotes the entry of
$M$ in row $i$, column $j$.)

So the spectral theorem applies and supplies an orthonormal eigenbasis
$\vv1, \dots, \vv n$ with $A\T A \vv i = \lambda_i \vv i$ --- for free, in any
dimension, with no rotating frame required. Now take any two of them, $i \neq j$,
and compute the inner product of their images one step at a time:
\begin{align}
\ip{A\vv i}{A\vv j}
  &= (A\vv i)\T (A\vv j)          && \text{definition of the inner product} \nonumber \\
  &= \vv i\T A\T A \vv j          && \text{transpose of a product} \nonumber \\
  &= \vv i\T (A\T A \vv j)        && \text{regrouping; matrix product is associative} \nonumber \\
  &= \vv i\T (\lambda_j \vv j)    && \text{$\vv j$ is an eigenvector of $A\T A$} \nonumber \\
  &= \lambda_j \,\vv i\T \vv j    && \text{scalars pull out} \nonumber \\
  &= \lambda_j \ip{\vv i}{\vv j}  && \text{definition again} \nonumber \\
  &= 0                            && \text{the $\vv i$ are orthonormal, $i \neq j$.}
\end{align}
The images are orthogonal automatically. \emph{The eigenvectors of $A\T A$ are
the frame the whole note is about}, and the rotating-frame picture of
\S\ref{sec:existence} is what this computation looks like in the plane.

Notice which step did the work. The eigenvector equation converted the
\emph{operator} $A\T A$ into the \emph{scalar} $\lambda_j$, after which
orthogonality of the $\vv i$ finished the job. Feed in a non-eigenvector and
line four fails: $A\T A \vv j$ is some other vector, not a multiple of $\vv j$,
and the inner product has no reason to vanish.

\subsubsection*{Why the singular values are real}

$A\T A$ is not merely symmetric but positive semidefinite. For any vector $x$,
\begin{equation}x\T (A\T A)\, x
\;=\; \underbrace{(x\T A\T)}_{= \,(Ax)\T} (A x)
\;=\; (Ax)\T (Ax)
\;=\; \|Ax\|^2
\;\geq\; 0 ,
\end{equation}
the last step because a squared length is never negative. Now take $x = \vv i$,
an eigenvector:
\begin{equation}\vv i\T (A\T A) \vv i
= \vv i\T (\lambda_i \vv i)
= \lambda_i \underbrace{\vv i\T \vv i}_{=\,\|\vv i\|^2 =\, 1}
= \lambda_i ,
\end{equation}
and the same quantity equals $\|A\vv i\|^2$ by the display above. Hence
\begin{equation}\lambda_i = \|A \vv i\|^2 \geq 0,
\end{equation}
so $\sigma_i := \sqrt{\lambda_i}$ is a well-defined nonnegative real number, and
it is exactly the stretch factor $\|A\vv i\|$ we wanted.

This step deserves its own paragraph because it is easy to skip past. Symmetry
alone gives \emph{real} eigenvalues, but real numbers can be negative, and
$\sqrt{\lambda_i}$ would then be imaginary --- $\Sigma$ would not exist. What
rules that out is positive semidefiniteness, which comes from $A\T A$ being a
Gram matrix rather than from its symmetry. Both properties are needed, and they
are different properties.

\subsubsection*{The same statement in the plane}

The two arguments are not merely analogous; they are the same equation. To see
it, take any symmetric $2\times 2$ matrix $G = \left(\begin{smallmatrix} p & q \\ q & r\end{smallmatrix}\right)$
and ask what its entries become in a frame rotated by $\theta$. Conjugating by
the rotation $R_\theta = \left(\begin{smallmatrix} \cos\theta & -\sin\theta \\ \sin\theta & \cos\theta \end{smallmatrix}\right)$
and reading off the off-diagonal entry gives
\begin{equation}\left(R_\theta\T\, G\, R_\theta\right)_{12}
\;=\; q\,(\cos^2\theta - \sin^2\theta) + (r - p)\sin\theta\cos\theta
\;=\; q \cos 2\theta + \frac{r-p}{2}\,\sin 2\theta .
\end{equation}
Now substitute the Gram matrix, $G = A\T A$, whose entries are
\begin{equation}p = a^2 + c^2, \qquad q = ab + cd, \qquad r = b^2 + d^2 .
\end{equation}
Then $q = ab+cd$ and $\tfrac{r-p}{2} = -\tfrac12(a^2 - b^2 + c^2 - d^2)$, so
\begin{equation}\left(R_\theta\T\, A\T A\, R_\theta\right)_{12}
\;=\; (ab+cd)\cos 2\theta - \tfrac12\left(a^2 - b^2 + c^2 - d^2\right)\sin 2\theta
\;=\; f(\theta),
\end{equation}
matching the closed form of \S\ref{sec:uniqueness-f} term for term. The
off-diagonal entry of the Gram matrix in the rotated frame \emph{is} the damage
function.

So diagonalizing $A\T A$ and solving $f(\theta) = 0$ are not two routes to the
same answer --- they are one problem stated twice. Both ask for the rotation
that annihilates the off-diagonal entry. And the diagonal entries left behind,
\begin{equation}\left(R_\theta\T\, A\T A\, R_\theta\right)_{11} = \|A\vv1\|^2 = \sigma_1^2,
\qquad
\left(R_\theta\T\, A\T A\, R_\theta\right)_{22} = \|A\vv2\|^2 = \sigma_2^2,
\end{equation}
are the squared singular values. For the running example this is checked
numerically in \S\ref{sec:example}.

\subsubsection*{Three routes, one frame}

\begin{center}
\begin{tabular}{@{}p{2.9cm}p{4.6cm}p{4.6cm}@{}}
\toprule
& \textbf{Shows} & \textbf{Costs} \\
\midrule
Rotating frame \newline (\S\ref{sec:existence})
  & why a crossing is forced, and where it sits: the fulcrum where the
    distortion cancels
  & planar only; no route to $\R^n$ \\[0.4em]
Variational \newline (\S\ref{sec:variational})
  & constructive; explains why the $\sigma_i$ come out sorted; proves the
    spectral theorem rather than assuming it
  & no picture above $\R^2$; stationarity replaces balance \\[0.4em]
Spectral \newline (\S\ref{sec:algebraic})
  & fastest; every dimension at once
  & most opaque; the geometry is invisible in the statement \\
\bottomrule
\end{tabular}
\end{center}

\noindent
These are not three competing theorems. They are one theorem approached from
three distances. The variational route is the spectral theorem's proof with the
geometry left in; the spectral statement is that proof with the geometry
compressed out; and the rotating frame is what both look like in the one
dimension where you can watch them happen. The planar picture is worth keeping
even though it does not scale, because it shows what the machinery above it is
\emph{for}: the greedy maximization and the eigenvectors are both, in the end,
hunting the right angle that survives.

\subsection{The same frame from the other side}
\label{sec:duality}

Everything so far was built from the \emph{input} side. We rotated a frame in
the domain, watched its images, and chose the frame $V$ whose images come out
perpendicular --- which forced $V$ to diagonalize $A\T A$. But the picture that
started the article was symmetric: a circle on the left, an ellipse on the
right, and we could just as well have begun on the right. This subsection does
that, and the reward is not a new theorem but a clearer view of the one we have.

Begin, then, from the output. The ellipse's axes are a perpendicular frame
$\uu1, \uu2$ in the codomain --- take that as given, it is what ``axes'' means.
What are their preimages? Solving $A\vv i = \sigma_i \uu i$, the question is
whether the $\vv i$ come out orthonormal. They do, and the reason is the mirror
of the argument we already ran.

\begin{figure}[htbp]
\centering
\begin{tikzpicture}[scale=1,>={Stealth[length=1.7mm]}]
\draw[gcol!50,thin] (-1.55,0)--(1.55,0);
\draw[gcol!50,thin] (0,-1.55)--(0,1.55);
\draw[gcol,thin] (0.00,0.00) circle (1.35);
\draw[->,vcol,very thick] (0,0)--(1.118,0.757);
\draw[->,ucol,very thick] (0,0)--(-0.757,1.118);
\node[vcol,font=\small] at (1.364,0.923) {$\mathbf{v}_1$};
\node[ucol,font=\small] at (-0.946,1.397) {$\mathbf{v}_2$};
\node[gcol,font=\scriptsize] at (0,-1.85) {input sphere};
\node[font=\scriptsize,text width=3.3cm,align=center] at (0,-2.35) {maximize $\|A\mathbf{v}\|$:\\ eigenvectors of $A^{\!\top}\!A$};
\node at (2.5,0) {$\xmapsto{\;\;A\;\;}$};
\draw[gcol!50,thin] (3.60,0)--(7.00,0);
\draw[gcol!50,thin] (5.30,-1.70)--(5.30,1.70);
\draw[gcol!35,thin,dashed] (5.30,0.00) circle (0.84);
\draw[gcol,thin,rotate around={10.90:(5.30,0.00)}] (5.30,0.00) ellipse (1.400 and 0.961);
\draw[->,vcol,very thick] (5.30,0.00)--(6.674,0.265);
\draw[->,ucol,very thick] (5.30,0.00)--(5.118,0.944);
\node[vcol,font=\small] at (6.867,0.352) {$\sigma_1\mathbf{u}_1$};
\node[ucol,font=\small] at (5.064,1.227) {$\sigma_2\mathbf{u}_2$};
\draw[gcol,thin] (6.517,0.234)--(6.487,0.392)--(6.644,0.422);
\node[gcol,font=\scriptsize] at (5.30,-1.90) {image ellipsoid};
\node[font=\scriptsize,text width=3.6cm,align=center] at (5.30,-2.40) {extremal radius $\perp$ surface:\\ eigenvectors of $AA^{\!\top}$};
\end{tikzpicture}
\caption{\textbf{Two proofs, two ends of the map.} \emph{Left:} the input-side
construction of \S\S\ref{sec:existence}--\ref{sec:algebraic} --- maximize
$\|A\mathbf{v}\|$ over the sphere, landing on the eigenvectors $\vv i$ of
$A\T A$. \emph{Right:} the output-side construction --- the ellipsoid's axes are
its extremal radii, where the radius meets the surface at a right angle (marked),
and these are the eigenvectors $\uu i$ of $AA\T$. The arrow $A$ links the two:
$A\vv i = \sigma_i \uu i$. Same variational instinct --- extremize a length ---
applied at opposite ends.}
\label{fig:dual-geo}
\end{figure}

\paragraph{Why the axes are eigenvectors of $AA\T$.} An axis of the ellipsoid is
a direction of extremal radius, and at the tip of an axis the boundary is
perpendicular to the radius (Figure~\ref{fig:dual-geo}, right) --- slide along
the surface anywhere else and the distance to the center changes to first order,
so only there is the length stationary. Written in the output variable
$\mathbf{y} = A\mathbf{x}$, the constraint $\|\mathbf{x}\| = 1$ becomes
$\mathbf{y}\T (AA\T)^{-1} \mathbf{y} = 1$: the ellipsoid is the level set of the
quadratic form $(AA\T)^{-1}$, and the axes of such a form are its eigenvectors.
So the ellipsoid's axes are the eigenvectors of $AA\T$ --- the output-space
companion of $A\T A$.

\paragraph{The mirror computation.} With the $\uu i$ eigenvectors of $AA\T$, the
preimages $\vv i = A^{-1}(\sigma_i \uu i)$ satisfy, for $i \neq j$,
\begin{equation}\ip{\vv i}{\vv j}
  = \sigma_i \sigma_j \, \ip{A^{-1}\uu i}{A^{-1}\uu j}
  = \sigma_i \sigma_j \, \uu i\T (AA\T)^{-1} \uu j
  = \frac{\sigma_i \sigma_j}{\lambda_j}\, \ip{\uu i}{\uu j}
  = 0,
\end{equation}
using $(A^{-1})\T A^{-1} = (AA\T)^{-1}$ and $(AA\T)^{-1}\uu j = \lambda_j^{-1}\uu j$
with $\lambda_j = \sigma_j^2$; the final zero is orthogonality of distinct
eigenvectors of the symmetric matrix $AA\T$. The same substitution with $i = j$
gives $\|\vv i\| = 1$. \emph{The preimages of the axes are orthonormal} ---
proved, this time, without a single derivative, by borrowing the spectral
theorem in the output space rather than building it in the input space.\footnote{%
Written for invertible $A$, so that $A^{-1}$ makes sense and the ellipsoid is
full-dimensional. For rectangular or singular $A$ the same argument runs with
the pseudoinverse on the range of $A$.}

\paragraph{One decomposition, two spectra.} The two proofs are mirror images
across the map (Figure~\ref{fig:dual-square}). The input-side proof diagonalizes
$A\T A$, an $n \times n$ matrix living in the domain, and produces $V$; the
output-side proof diagonalizes $AA\T$, an $m \times m$ matrix living in the
codomain, and produces $U$. These two matrices are different --- different
sizes, even --- but they share their nonzero eigenvalues, the squared singular
values $\sigma_i^2$, because both measure the same stretches. The singular value
decomposition is precisely the statement that their eigenframes are
\emph{compatible}: not two unrelated diagonalizations, but two ends of a single
map welded together by $A\vv i = \sigma_i \uu i$.

\begin{figure}[htbp]
\centering
\begin{tikzpicture}[scale=1,>={Stealth[length=1.8mm]},
  node distance=0pt,
  box/.style={draw=gcol!55,rounded corners=2pt,inner sep=6pt,font=\small,align=center,fill=gcol!5}]
\node[box] (is) at (0,2.4)   {input sphere\\ $\{\,\mathbf{x}:\|\mathbf{x}\|=1\,\}$};
\node[box] (oe) at (6.6,2.4) {output ellipsoid\\ $\{\,A\mathbf{x}\,\}$};
\node[box] (vi) at (0,0)     {$\mathbf{v}_i$: eigenvectors\\ of $A^{\!\top}\!A$ \;(input)};
\node[box] (ui) at (6.6,0)   {$\mathbf{u}_i$: eigenvectors\\ of $AA^{\!\top}$ \;(output)};
\draw[->,vcol,thick] (is) -- node[above,font=\scriptsize,black]{$A$} (oe);
\draw[->,gcol,thick] (is) -- node[left,font=\scriptsize,black,align=right]{maximize\\ $\|A\mathbf{v}\|$}  (vi);
\draw[->,gcol,thick] (oe) -- node[right,font=\scriptsize,black,align=left]{extremal\\ radius}  (ui);
\draw[<->,ucol,thick] (vi) -- node[below,font=\scriptsize,black]{$A\mathbf{v}_i=\sigma_i\mathbf{u}_i$} (ui);
\node[font=\scriptsize,vcol] at (-1.35,1.2) {proof 1};
\node[font=\scriptsize,ucol] at (7.95,1.2) {proof 2};
\end{tikzpicture}
\caption{\textbf{The two proofs as one square.} Down the left, the input-side
construction lands on the eigenvectors of $A\T A$; down the right, the
output-side construction lands on the eigenvectors of $AA\T$. Across the top,
$A$ carries sphere to ellipsoid; across the bottom, $A\vv i = \sigma_i \uu i$
identifies the two frames as the same decomposition seen from opposite corners.
The SVD is the square commuting.}
\label{fig:dual-square}
\end{figure}

\noindent
This is why the article kept only one of the two: they are the same content, and
the input side is where the frame can be \emph{discovered} rather than merely
verified --- you can rotate it and watch the crossing happen. The output side,
elegant as it is, assumes the axes are already in hand. Kept together, they say
that whichever end you grip the map by, the same orthonormal frame is waiting.

\section{Naming the pieces}
\label{sec:naming}

Write $\vv1, \dots, \vv n$ for that special input frame --- the \textbf{right
singular vectors}. Their images are orthogonal but not unit length, so
factor out the lengths:
\begin{equation}A \vv i \;=\; \sigma_i \uu i, \qquad \sigma_i = \|A\vv i\| \geq 0, \qquad \|\uu i\| = 1 .
\end{equation}
The $\sigma_i$ are the \textbf{singular values} and the $\uu i$ are the
\textbf{left singular vectors}.

Before collecting these into a formula, it is worth seeing what they say about
$A$ as a map. The $\vv i$ are an orthonormal basis of the input space, so any
input $\mathbf{x}$ has an expansion $\mathbf{x} = c_1 \vv1 + \dots + c_n \vv n$,
where $c_i = \ip{\mathbf{x}}{\vv i}$ is simply the coordinate of $\mathbf{x}$
along $\vv i$. Apply $A$ and use linearity together with $A\vv i = \sigma_i\uu i$:
\begin{equation}A\mathbf{x} = c_1 (A\vv1) + \dots + c_n (A\vv n)
  = \sigma_1 c_1 \uu1 + \dots + \sigma_n c_n \uu n .
\end{equation}
Read that off slowly. $A$ takes the coordinate of $\mathbf{x}$ along each input
axis $\vv i$, scales it by $\sigma_i$, and lays the result down along the
matching output axis $\uu i$. The coordinates that go in are the coordinates that
come out --- only the axis labels change, $\vv i \mapsto \uu i$, and each is
stretched by its own $\sigma_i$. In these two bases the map is \emph{diagonal}:
whatever tangle $A$ shows in the standard basis, in the right input frame and the
right output frame it does nothing more complicated than scale each axis. Bundle
this action into matrices --- $V\T$ reads out the input coordinates, $\Sigma$
scales them, $U$ writes them along the output axes --- and it \emph{is} a
factorization:
\begin{equation}\boxed{\; A = U \Sigma V\T \;}
\end{equation}
The construction of \S\S\ref{sec:existence}--\ref{sec:algebraic} has, without
announcing it, proved that such a frame always exists --- so the factorization is
not a lucky feature of some matrices but a theorem about all of them, worth
stating plainly.

\begin{quote}
\textbf{Singular value decomposition.} \itshape Every real $m \times n$ matrix
$A$ factors as
\[
A = U\Sigma V\T,
\]
with $U \in \R^{m\times m}$ and $V \in \R^{n\times n}$ orthogonal, and
$\Sigma \in \R^{m\times n}$ ``diagonal'' --- $\Sigma_{ii} = \sigma_i \geq 0$ and
all other entries zero. The columns of $V$ are the right singular vectors, the
columns of $U$ the left singular vectors, and the $\sigma_i$ the singular
values. Taking $\sigma_1 \geq \sigma_2 \geq \dots \geq 0$, the singular values
are uniquely determined by $A$; the vectors $U, V$ are determined up to the
freedoms catalogued in \S\ref{sec:uniqueness}.
\end{quote}

\noindent
The names \emph{left} and \emph{right} are positional --- $U$ sits on the left
of the product, $V$ on the right --- and carry no deeper meaning.

Note the asymmetry in how the two frames arise. \emph{$V$ is chosen; $U$ is a
consequence.} We select the input frame by a property it has (its images are
orthogonal), and the output frame is then whatever we happen to land on.

\subsection*{Four ways to say the same thing}

The right singular vectors admit several characterizations, all equivalent:

\begin{description}\itemsep0.4em
\item[Geometric.] The orthonormal input frame whose images stay orthogonal.
Every other orthonormal frame comes out skewed.
\item[Metric.] The frame that lands on the principal axes of the image
ellipsoid (\S\ref{sec:ellipsoid}), with $\sigma_i$ the semi-axis lengths.
\item[Variational.] The frame obtained by greedy maximization:
$\vv1 = \arg\max_{\|v\|=1} \|Av\|$, then $\vv2$ the same among unit vectors
orthogonal to $\vv1$, and so on.
\item[Algebraic.] The eigenvectors of $A\T A$.
\end{description}

The first is the one to keep in your head; the last is the one you compute
with. Symmetrically, the left singular vectors are the eigenvectors of $AA\T$.
Since $A\T A$ and $AA\T$ share the same nonzero eigenvalues $\sigma_i^2$, one
$\Sigma$ serves both frames.

\section{The ellipsoid}
\label{sec:ellipsoid}

Let $S^{n-1} = \{v \in \R^n : \|v\| = 1\}$ be the unit sphere in the input
space. Its image $A(S^{n-1})$ is an ellipsoid. To see this, write $v$ in the
$V$-basis as $v = \sum c_i \vv i$ with $\sum c_i^2 = 1$. Then
$Av = \sum c_i \sigma_i \uu i$, so the image point has coordinate
$y_i = \sigma_i c_i$ along $\uu i$, and the constraint becomes
\begin{equation}\sum_i \frac{y_i^2}{\sigma_i^2} \;=\; 1 ,
\end{equation}
the standard ellipsoid equation in $u$-coordinates. The $\uu i$ are the axis
\emph{directions}; the $\sigma_i$ are the axis \emph{lengths}. Neither alone
describes the ellipsoid --- you need both.

This gives the reading of the factorization as a sequence of three motions.
Right to left in $U\Sigma V\T$:

\begin{center}
\begin{tabular}{@{}ll@{}}
\toprule
$V\T$ & rotate the input so that $\vv i$ lands on the $i$th coordinate axis \\
$\Sigma$ & stretch axis $i$ by $\sigma_i$ \\
$U$ & rotate so that the coordinate axes land on the $\uu i$ \\
\bottomrule
\end{tabular}
\end{center}

\noindent
\emph{Every linear map is a rotation, then an axis-aligned scaling, then
another rotation.} A shear does not look like that --- it looks like sliding
--- but it is one, and the SVD is the change of viewpoint that reveals it.

\subsection*{Degenerate cases}

The word ``ellipsoid'' must be read loosely. If some $\sigma_i = 0$ the
ellipsoid is flattened in that direction. If $m > n$ it is a lower-dimensional
ellipsoid embedded in a larger space. If $m < n$ the sphere maps onto the
\emph{solid} ellipsoid, not just its surface, because vectors with a kernel
component land strictly inside.

The statement that covers every case at once: for $A$ of rank $r$,
\begin{quote}
$A$ maps the unit sphere of the \emph{row space} bijectively onto an
$(r{-}1)$-dimensional ellipsoid surface in the \emph{column space}, with
semi-axes $\sigma_1, \dots, \sigma_r$ along $\uu1, \dots, \uu r$. The kernel is
crushed to zero; the cokernel is never reached.
\end{quote}

This is the four-fundamental-subspaces picture with metric information
attached. The SVD says $A$ is an isomorphism from row space to column space,
and diagonalizes it. Explicitly, with $r = \operatorname{rank} A$:
\begin{equation}\underbrace{\vv1 \dots \vv r}_{\text{row space}}, \quad
\underbrace{\vv{r+1} \dots \vv n}_{\ker A}, \quad
\underbrace{\uu1 \dots \uu r}_{\text{column space}}, \quad
\underbrace{\uu{r+1} \dots \uu m}_{\operatorname{coker} A} .
\end{equation}

\section{A worked example}
\label{sec:example}

Take
\begin{equation}A = \begin{pmatrix} 2 & 1 \\ -0.5 & 1.5 \end{pmatrix},
\end{equation}
a matrix with no special structure --- not symmetric, not a multiple of a
rotation. Its Gram matrix is
\begin{equation}A\T A = \begin{pmatrix} 4.25 & 1.25 \\ 1.25 & 3.25 \end{pmatrix},
\end{equation}
whose off-diagonal entry is nonzero, so the standard basis is \emph{not} the
frame we want. Feeding $a = 2$, $b = 1$, $c = -0.5$, $d = 1.5$ into the closed
form of \S\ref{sec:uniqueness-f},
\begin{equation}f(\theta) = 1.25\cos 2\theta - 0.5 \sin 2\theta,
\end{equation}
with amplitude $R = 1.3463$ and phase $\varphi = -21.80^\circ$. The roots
$\theta = (\varphi \pm 90^\circ)/2$ land at $34.10^\circ$ and $-55.90^\circ$ ---
the crossings marked in Figure~\ref{fig:sweep}. Taking the first,
\begin{equation}\vv1 = (0.8281,\, 0.5606), \qquad \vv2 = (-0.5606,\, 0.8281),
\end{equation}
and the images are
\begin{equation}A\vv1 = (2.2168,\, 0.4269), \qquad A\vv2 = (-0.2932,\, 1.5224).
\end{equation}
Their inner product is $(2.2168)(-0.2932) + (0.4269)(1.5224) = 0$, as promised.
Their lengths are the singular values,
\begin{equation}\sigma_1 = 2.2575, \qquad \sigma_2 = 1.5504,
\end{equation}
and normalizing gives $\uu1 = (0.9820, 0.1891)$ and $\uu2 = (-0.1891, 0.9820)$.
As a check, $\sigma_1\sigma_2 = 3.5 = \det A$.

\begin{figure}[h]
\centering
\begin{tikzpicture}[scale=1.4, >={Stealth[length=2mm]}]

\begin{scope}
  \draw[gcol!50, thin] (-1.35,0) -- (1.35,0);
  \draw[gcol!50, thin] (0,-1.35) -- (0,1.35);
  \draw[gcol, thin] (0,0) circle (1);
  \draw[->, vcol, very thick] (0,0) -- (0.8281,0.5606);
  \draw[->, ucol, very thick] (0,0) -- (-0.5606,0.8281);
  \node[vcol, font=\small] at (1.06,0.68) {$\vv1$};
  \node[ucol, font=\small] at (-0.76,0.99) {$\vv2$};
  \node[gcol, font=\scriptsize] at (0,-1.62) {$\vv1$ at $34.10^\circ$};
\end{scope}

\node at (2.15,0) {$\xmapsto{\;\;A\;\;}$};

\begin{scope}[xshift=4.6cm]
  \draw[gcol!50, thin] (-2.45,0) -- (2.45,0);
  \draw[gcol!50, thin] (0,-1.75) -- (0,1.75);
  \draw[gcol!35, thin, dashed] (0,0) circle (1);
  \draw[gcol, thin, rotate=10.90] (0,0) ellipse (2.2575 and 1.5504);
  \draw[->, vcol, very thick] (0,0) -- (2.2168,0.4269);
  \draw[->, ucol, very thick] (0,0) -- (-0.2932,1.5224);
  \node[vcol, font=\small] at (2.42,0.58) {$\sigma_1 \uu1$};
  \node[ucol, font=\small] at (-0.60,1.66) {$\sigma_2 \uu2$};
  \node[gcol, font=\scriptsize] at (0,-2.02) {$\uu1$ at $10.90^\circ$};
\end{scope}

\end{tikzpicture}
\caption{The unit circle and its image. The frame $\vv1, \vv2$ at $34.10^\circ$
is the one whose images stay perpendicular; those images land on the ellipse's
axes, with lengths $\sigma_1 = 2.2575$ and $\sigma_2 = 1.5504$. The input frame
sits at $34.10^\circ$ and the output frame at $10.90^\circ$: a gap of
$23.20^\circ$. The two panels are different copies of $\R^2$; the only link
between them is the pairing $\vv i \mapsto \sigma_i \uu i$.}
\label{fig:ellipse-detail}
\end{figure}

That gap of $23.20^\circ$ between the frames is the rotation $UV\T$. Since
$\det(UV\T) = +1$ it is a genuine rotation, not a reflection. For a generic
matrix it is nonzero and bears no relation to anything else about $A$: knowing
$V$ tells you nothing about $U$ without passing through $A$.

\subsection{An aside: the shear}

The virtue of an arbitrary matrix is that it has no structure to mislead. The
vice is that nobody has any prior expectation about it, so the SVD's claim ---
\emph{this is a rotation, a stretch, and a rotation} --- is unsurprising.

For that, the shear is better:
\begin{equation}A = \begin{pmatrix} 1 & 1 \\ 0 & 1\end{pmatrix} .
\end{equation}
It slides every point rightward by an amount equal to its height. Sliding does
not look like rotate--stretch--rotate. But
\begin{equation}A\T A = \begin{pmatrix} 1 & 1 \\ 1 & 2\end{pmatrix},
\qquad \lambda = \frac{3\pm\sqrt5}{2},
\qquad \sigma_1 = 1.6180, \quad \sigma_2 = 0.6180 ,
\end{equation}
the golden ratio and its reciprocal, with $\sigma_1\sigma_2 = 1 = \det A$. The
frames sit at $\vv1$: $58.28^\circ$ and $\uu1$: $31.72^\circ$, differing by a
rotation of $26.57^\circ$. So the shear \emph{is} a rotate--stretch--rotate,
and the SVD is the change of viewpoint that reveals it.

Why $58.28^\circ$? The shear pushes points rightward in proportion to height.
A frame sitting in the first quadrant has both arms up high, so both get pushed
right and they rotate toward each other: the angle closes, $f > 0$. A frame
straddling the vertical has one arm high-right and one high-left; pushing both
rightward separates them and the angle opens, $f < 0$. At $58.28^\circ$ the two
effects balance. The right angle sits at the fulcrum and comes through intact.

\begin{table}[h]
\centering
\begin{tabular}{@{}lrrrrr@{}}
\toprule
& $\sigma_1$ & $\sigma_2$ & $\vv1$ angle & $\uu1$ angle & gap $UV\T$ \\
\midrule
Generic $\left(\begin{smallmatrix}2&1\\-0.5&1.5\end{smallmatrix}\right)$
  & $2.2575$ & $1.5504$ & $34.10^\circ$ & $10.90^\circ$ & $23.20^\circ$ \\
Shear $\left(\begin{smallmatrix}1&1\\0&1\end{smallmatrix}\right)$
  & $1.6180$ & $0.6180$ & $58.28^\circ$ & $31.72^\circ$ & $26.57^\circ$ \\
\bottomrule
\end{tabular}
\caption{Neither matrix has aligned frames; both rotate by roughly $25^\circ$
between input and output. Alignment $U = V$ would require $A$ symmetric
positive definite (\S\ref{sec:eigen}), and neither matrix is symmetric.}
\end{table}

\noindent
One caution about the shear, which is why it is an aside rather than the main
example. Its $\uu1 = (0.8507, 0.5257)$ is exactly $\vv1 = (0.5257, 0.8507)$
with the coordinates swapped, and the two angles sum to $90^\circ$. That is an
artifact: the shear is \emph{persymmetric} (symmetric about its anti-diagonal),
which forces the relation. It is easy to mistake this tidiness for alignment,
or for a general fact. It is neither. The generic matrix above has no such
pattern, and it is the honest picture.

\section{Can you choose the frame? Uniqueness and freedom}
\label{sec:uniqueness}
\label{sec:choose}

A natural question tests how much the construction really pins down: given $A$,
could you pick \emph{any} orthonormal basis $V$ of the input space and find a
matching orthonormal $U$ and some diagonal $\Sigma'$ with $A = U \Sigma' V\T$?
Perhaps the freedom to choose $\Sigma'$ buys enough slack.

No. Requiring $AV = U\Sigma'$ says precisely that $A\vv1, \dots, A\vv n$ are
mutually orthogonal --- and the $\sigma_i'$ are then \emph{forced} to be the
norms $\|A\vv i\|$, not chosen. The freedom in $\Sigma'$ is illusory; the
orthogonality of the images is the entire constraint, and it is the same
constraint $f(\theta) = 0$ from \S\ref{sec:uniqueness-f}. In the plane this is
exactly what Figure~\ref{fig:sweep} shows: of the continuum of frames, two
crossings, describing one.

The count confirms it in any dimension, and the same count will measure the
leftover freedom in a moment, so it is worth doing once. An orthonormal frame
$V$ in $\R^n$ has $n(n-1)/2$ degrees of freedom --- build it column by column:
$\vv1$ is any unit vector ($n-1$ parameters), $\vv2$ any unit vector orthogonal
to it ($n-2$), and so on, giving $(n-1) + (n-2) + \dots + 0 = n(n-1)/2$. (For
$n=2$ this is $1$, a single angle; for $n=3$ it is $3$, the Euler angles.)
Requiring the $n$ images to be pairwise orthogonal imposes one equation per
pair, again $n(n-1)/2$. Constraints exactly consume parameters, so the solution
set is generically discrete --- finitely many sign and permutation variants of
one frame, not a continuum. Equivalently, $A\T A = V\Sigma'^2 V\T$ forces $V$ to
be an eigenbasis of the symmetric $A\T A$, and eigenbases are rigid.

\paragraph{The one exception.} When $A\T A = \sigma^2 I$ --- $A$ a scalar
multiple of an orthogonal matrix --- every direction stretches equally, every
orthonormal $V$ works, and $U = AV/\sigma$. The image of the sphere is a sphere,
so no direction is distinguished. This is worth stating as a principle:
\emph{the content of the SVD is anisotropy.} The decomposition earns its keep
exactly when the singular values differ; the ratio $\sigma_1/\sigma_n$ is the
eccentricity of the image ellipsoid, and also the condition number.

\subsection*{Exactly what freedom remains}

So $V$ is forced --- but not \emph{completely}. The singular values are rigid
(with the descending convention $\sigma_1 \geq \dots \geq \sigma_r > 0$ they are
uniquely determined by $A$), yet the frames retain a sliver of freedom, and it
is worth naming precisely what:

\begin{enumerate}\itemsep0.3em
\item \textbf{Sign.} For a simple singular value, $\uu i \to -\uu i$ and
$\vv i \to -\vv i$ together. The coupling is essential: you cannot flip one
without the other, since $A = U\Sigma V\T$ must be preserved.
\item \textbf{Repeated singular values.} If $\sigma_i$ has multiplicity $k$,
the corresponding $k$-dimensional subspaces of $U$ and $V$ can be rotated by
any $k \times k$ orthogonal matrix --- applied to \emph{both}. (This is the
exception above, localized to one repeated value instead of all of them.)
\item \textbf{Zero singular values.} The columns of $U$ spanning
$\operatorname{coker}(A)$ and of $V$ spanning $\ker(A)$ are arbitrary
orthonormal bases of those subspaces, independent of each other.
\item \textbf{Ordering}, if the descending convention is dropped.
\end{enumerate}

\begin{claim}
For $A$ of full rank with distinct singular values, the (thin) SVD is unique up
to the signs in item~1.
\end{claim}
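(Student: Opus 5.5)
Suppose $A = U\Sigma V\T = U'\Sigma' V'^{\mathsf T}$ are two thin SVDs, with both diagonals in descending order, $\sigma_1 > \sigma_2 > \dots > \sigma_r > 0$ (full rank, distinct values). I will first pin down $\Sigma$, then $V$, then $U$. For $\Sigma$: form the Gram matrix, $A\T A = V\Sigma^2 V\T = V'\Sigma'^2 V'^{\mathsf T}$. In each expression the columns of $V$ (resp.\ $V'$) are orthonormal eigenvectors of $A\T A$ with eigenvalues $\sigma_i^2$ (resp.\ $\sigma_i'^2$). So both lists of squared singular values are the nonzero eigenvalues of $A\T A$ with multiplicity. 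Sorting them descending forces $\Sigma = \Sigma'$; this is the rigidity of singular values already asserted in \S\ref{sec:uniqueness}.

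For $V$: because the $\sigma_i^2$ are distinct and nonzero, each nonzero eigenvalue $\sigma_i^2$ of $A\T A$ is simple. Its eigenspace is therefore one-dimensional (full rank ensures none of these coincides with the kernel's eigenvalue $0$). Both $\vv i$ and $\vv i'$ are unit vectors in that line, so $\vv i' = \epsilon_i \vv i$ with $\epsilon_i \in \{\pm1\}$. The step that needs care is justifying that the eigenspace is exactly one-dimensional. I would argue this directly from the diagonalization $A\T A = V\Sigma^2V\T$ (extended by an orthonormal basis of $\ker A$ with eigenvalue $0$): any $x$ with $A\T A x = \sigma_i^2 x$ has coordinates $c_j = \ip{x}{\vv j}$ satisfying $(\sigma_j^2 - \sigma_i^2)c_j = 0$. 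Distinctness then kills every $c_j$ except $c_i$. This is the only place distinctness and full rank genuinely enter, so it is the main obstacle, and writing the coordinates out explicitly handles it.

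For $U$: I will not appeal separately to $AA\T$. Instead I use the defining relation, as in \S\ref{sec:naming}, that $V$ is chosen and $U$ is a consequence. Since $\sigma_i > 0$, we get $\uu i = A\vv i/\sigma_i$ and $\uu i' = A\vv i'/\sigma_i = \epsilon_i \uu i$. Thus the same sign $\epsilon_i$ flips both $\uu i$ and $\vv i$, which is exactly the coupled freedom of item~1. Finally, any such coupled sign choice does preserve $A$, since $\sum_i \sigma_i (\epsilon_i\uu i)(\epsilon_i \vv i)\T = \sum_i \sigma_i \uu i\vv i\T$. Hence the thin SVD is unique up to those signs.
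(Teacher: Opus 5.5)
Your proof is correct and follows the paper's own reasoning: the paper argues that $A\T A = V\Sigma'^2 V\T$ forces $V$ to be an eigenbasis of the symmetric $A\T A$, that such eigenbases are rigid when the eigenvalues are simple, and that $U$ is then determined by $\uu i = A\vv i/\sigma_i$ (``$V$ is chosen; $U$ is a consequence''), which leaves only the coupled sign flips of item~1. Your coordinate computation $(\sigma_j^2 - \sigma_i^2)c_j = 0$ just writes out explicitly the ``eigenbases are rigid'' step that the paper asserts without detail.
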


\noindent
That is the whole of it: the stretches are pinned exactly, the frames up to
signs, with extra room only where a singular value repeats or vanishes.

\paragraph{A reparametrization, not a compression.} Stepping back, the SVD of an
$n \times n$ matrix supplies
\begin{equation}\underbrace{\tfrac{n(n-1)}{2}}_{V} + \underbrace{n}_{\Sigma} + \underbrace{\tfrac{n(n-1)}{2}}_{U} \;=\; n^2
\end{equation}
numbers --- exactly the $n^2$ entries it started with. The information is
rearranged from ``what each column does'' into ``(input frame, stretches, output
frame)'', and that arrangement is where the geometry lives.

\paragraph{$U$ and $V$ are independent.} One consequence of the count deserves
emphasis. Pick any orthogonal $U$, any orthogonal $V$, any nonnegative diagonal
$\Sigma$: then $A = U\Sigma V\T$ is a matrix with exactly that SVD. Nothing
couples the two frames. Going \emph{backwards} from a given $A$, they are
determined but generally unrelated --- you cannot infer one from the other
without passing through $A$. This is precisely why the SVD carries more
information than a single eigenbasis: two independent frames rather than one.

\section{Relation to eigenvalues and eigenvectors}
\label{sec:eigen}

For a general square matrix, the two decompositions are largely unrelated.

\begin{equation}A = \begin{pmatrix} 0 & 1 \\ 0 & 0\end{pmatrix}:
\quad \lambda = 0, 0 \quad\text{but}\quad \sigma = 1, 0 .
\end{equation}
\begin{equation}A = \begin{pmatrix} 1 & t \\ 0 & 1\end{pmatrix}:
\quad \lambda = 1, 1 \ \text{ for all } t, \quad\text{but}\quad \sigma_1 \to \infty \ \text{ as } t \to \infty .
\end{equation}

The structural difference: singular vectors are always orthonormal and always
exist in full; eigenvectors need not be orthogonal, and a defective matrix does
not have a complete set. Eigenvectors live in one space and map to themselves;
singular vectors come in \emph{coupled pairs across two spaces}. That is the
fundamental distinction --- eigendecomposition uses one basis, SVD pairs two.

\paragraph{What does hold.}
\begin{itemize}\itemsep0.3em
\item $\prod |\lambda_i| = \prod \sigma_i = |\det A|$.
\item $\sigma_{\min} \leq |\lambda_i| \leq \sigma_{\max}$ for every eigenvalue;
in particular the spectral radius satisfies $\rho(A) \leq \sigma_{\max} = \|A\|_2$.
\item Weyl: $\prod_{i=1}^k |\lambda_i| \leq \prod_{i=1}^k \sigma_i$ for each $k$,
with equality at $k = n$.
\item Yamamoto: $\lim_{k\to\infty} \sigma_i(A^k)^{1/k} = |\lambda_i|$.
\end{itemize}

\paragraph{When they coincide.}
\begin{itemize}\itemsep0.3em
\item \textbf{Normal} $A$ (i.e.\ $A\T A = AA\T$): $\sigma_i = |\lambda_i|$, and
the eigenbasis is orthonormal, so $\vv i$ are the eigenvectors and
$\uu i = \pm \vv i$.
\item \textbf{Symmetric positive semidefinite}: $\sigma_i = \lambda_i$ and
$U = V$. The two decompositions are literally identical.
\item \textbf{Symmetric indefinite}: $\sigma_i = |\lambda_i|$, and $U$ and $V$
differ by sign flips on the negative eigendirections.
\end{itemize}

\noindent
So $U = V$ characterizes symmetric positive semidefinite matrices. Neither
example in this note is symmetric, which is why neither has aligned frames.

A caveat on the normal case: eigenvalues $\pm 3$ in a symmetric matrix give a
\emph{repeated} singular value $\sigma = 3$, so the SVD acquires rotation
freedom in that subspace that the eigendecomposition does not have. The
eigendecomposition can be strictly more rigid.

\section{Polar decomposition}
\label{sec:polar}

The gap between the input frame $V$ and the output frame $U$ has come up more
than once. Figure~\ref{fig:ellipse-detail} showed it as a concrete angle --- the
$23.20^\circ$ by which the running example's frames differ --- and
\S\ref{sec:choose} made the general point that $U$ and $V$ are independent, so
this gap is real information, not an artifact. It is worth isolating that gap as
a single object, and the SVD does it with one regrouping.

Start from $A = U\Sigma V\T$ and insert $V\T V = I$ between two copies of the
frames:
\begin{equation}
A = U\Sigma V\T = U\,(V\T V)\,\Sigma V\T = (U V\T)(V\Sigma V\T) .
\end{equation}
Name the two factors $Q = UV\T$ and $P = V\Sigma V\T$. Then
\begin{equation}
A = QP,
\end{equation}
the \textbf{polar decomposition}. Each factor is exactly what its construction
makes it: $Q = UV\T$ is orthogonal (a product of orthogonal matrices), and it is
precisely the rotation that carries $V$'s frame onto $U$'s --- the gap itself,
named. $P = V\Sigma V\T$ is symmetric positive semidefinite, a pure stretch
along $V$'s axes by the factors $\sigma_i$, with no rotation left in it.

So every matrix is a stretch followed by a rotation: $P$ deforms the input along
the right singular directions, and $Q$ then swings the result over to the output
frame. It is the same content as the SVD --- $Q$ and $P$ are built from
$U, \Sigma, V$ --- repackaged to put the rotation in one box and the stretch in
another. For the running example $Q$ is the rotation by $23.20^\circ$ seen in
Figure~\ref{fig:ellipse-detail}; for the shear it is $26.57^\circ$. The analogy
with a complex number $z = re^{i\theta}$ --- modulus $r \geq 0$ times a unit
phase --- is exact and gives the decomposition its name: $P$ is the modulus,
$Q$ the phase.

\part{Where the decomposition goes}

\noindent
Part I built the singular value decomposition from the ground up and stopped at
the moment it was proved to exist. This part spends it. The SVD --- or its
symmetric shadow, the eigendecomposition --- sits underneath a surprising share
of modern data analysis and machine learning, and the sections below trace five
of those connections. Each states plainly what the decomposition \emph{does}
supply and, just as plainly, where its contribution stops and some other idea
takes over; the honesty is part of the point, because ``the SVD is everywhere''
is less useful than knowing exactly which part of a method it is responsible for.

One thing is worth watching for as they go by. In several of these connections it
is not the \emph{result} of a Part~I proof that reappears but the proof
\emph{itself} --- the argument, run as a procedure. The construction that found
the frame becomes an algorithm that computes it; the lemma that characterized the
maximizer becomes the rule that tells an optimizer when to stop. Where that
happens it is flagged, and left to speak for itself.

\section{Where this leads: principal component analysis}
\label{sec:pca}

If you have ever run principal component analysis, reduced the dimension of a
dataset, or called a low-rank approximation, you have used the object this
article has been building --- though probably without the picture. Here is the
connection, in one step.

Let $A$ be a data matrix: $n$ rows, one per observation, and $p$ columns, one
per feature, with each column \emph{centered} to have mean zero. Then the
$p \times p$ matrix
\begin{equation}\frac{1}{n-1}\,A\T A
\end{equation}
is exactly the sample covariance matrix of the features --- its $(i,j)$ entry is
the covariance of feature $i$ with feature $j$, because centering makes the
column inner products into covariances. But $A\T A$ is the very matrix whose
eigenvectors are the right singular vectors of $A$ (\S\ref{sec:algebraic}). So
the right singular vectors $\vv1, \vv2, \dots$ are the eigenvectors of the
covariance --- the \textbf{principal components}, the orthogonal directions of
greatest variance in the data. And the singular values give the variances
outright: the spread of the data along $\vv i$ is
\begin{equation}\frac{\sigma_i^2}{n-1},
\end{equation}
so the largest singular value marks the direction of most variance, the next the
most among directions orthogonal to it, and so on --- which is precisely the
greedy description of the frame from \S\ref{sec:variational}, now read as a
statement about data.

The geometry carries over intact. The ellipsoid this article has drawn since
\S\ref{sec:observe} --- the image of the unit sphere under $A$ --- is, for a data
matrix, the shape of the data cloud itself, and its axes are the principal axes.
Finding the perpendicular frame that maps to those axes, the whole problem the
article set out to solve, \emph{is} finding the principal components. PCA is not
an application of the SVD so much as the same picture wearing the clothes of
statistics.

\section{Kernel methods: the two Gram matrices}
\label{sec:kernel}

\S\ref{sec:duality} noticed that a matrix $A$ carries two symmetric
companions: $A\T A$, living in the input space, and $AA\T$, living in the output
space, sharing their nonzero eigenvalues $\sigma_i^2$. That duality is not a
curiosity --- it is the engine of the \emph{kernel trick}, the idea that lets a
method reach into a vast feature space while computing only in a small one.

Suppose each data point $\mathbf{x}$ is lifted into a high-dimensional feature
vector $\phi(\mathbf{x})$, and stack these as the rows of a matrix $\Phi$: $n$
points (rows), $d$ features (columns), with $d$ possibly enormous. Principal
component analysis in feature space wants the eigenvectors of the $d \times d$
matrix $\Phi\T\Phi$ --- unusable if $d$ is a million, or infinite. But the
duality says the eigenvalues are shared with the $n \times n$ matrix
\begin{equation}K = \Phi\Phi\T, \qquad K_{ij} = \ip{\phi(\mathbf{x}_i)}{\phi(\mathbf{x}_j)},
\end{equation}
whose entries are just the pairwise inner products of the lifted points, and
whose eigenvectors map to the feature-space ones through $\Phi$. So one
diagonalizes the $n \times n$ matrix instead of the $d \times d$ one --- kernel
PCA --- and never touches the giant feature space directly. When there are fewer
samples than features, this is the difference between possible and impossible.

Notice what kernel PCA is really doing: it is running the \emph{proof} of the
duality, not merely quoting its conclusion. \S\ref{sec:duality} did not just
observe that $\Phi\T\Phi$ and $\Phi\Phi\T$ share a spectrum --- it exhibited the
map between their eigenframes, carrying an eigenvector of one to an eigenvector
of the other by applying $\Phi$ (there, $A$) and rescaling. Kernel PCA executes
exactly that transport: it finds the eigenvectors of the small $K$ and pushes
them through $\Phi$ to recover the feature-space principal components it could
never have computed directly. The argument that proved the two matrices are two
ends of one map is, step for step, the algorithm.

\paragraph{Where the SVD's contribution stops.} The duality explains why working
with the small matrix $K$ \emph{suffices} --- same spectrum, recoverable
eigenvectors. It does \emph{not} explain the other half of the trick: that the
inner product $\ip{\phi(\mathbf{x}_i)}{\phi(\mathbf{x}_j)}$ can often be computed
by a cheap formula $k(\mathbf{x}_i, \mathbf{x}_j)$ without ever forming
$\phi$ --- the Gaussian kernel evaluates an inner product in an
infinite-dimensional space in one line. That the cheap formula corresponds to a
genuine feature space is a separate theorem (Mercer's, on positive-definite
kernels), and it belongs to analysis, not to the linear algebra of Part~I. The
SVD supplies the dimension-dodge; Mercer supplies the cheap evaluation. Both are
needed, and they are different ideas.

\section{Gradient descent and the power method}
\label{sec:gradient}

Part~I found the top singular direction by \emph{maximizing} $\|A\mathbf{v}\|$
over the unit sphere, and derived the optimality condition
$A\T A\, \vv1 = \sigma_1^2 \vv1$. Read that condition again with an optimizer's
eye. The gradient of $\tfrac12\|A\mathbf{v}\|^2$ is $A\T A\, \mathbf{v}$, and the
eigenvector equation says exactly that this gradient points \emph{along}
$\mathbf{v}$ --- it has no component tangent to the sphere. There is nowhere left
to climb. \emph{The eigenvector equation is the stopping condition of gradient
ascent}, and Part~I's construction was, without saying so, a description of where
an optimizer comes to rest.

And it is the same \emph{proof}, not just the same statement. The heart of
Part~I's variational step was the lemma that at the maximizer,
$\ip{A\vv1}{A\mathbf{w}} = 0$ for every $\mathbf{w} \perp \vv1$ --- the images of
all competing directions are orthogonal to $A\vv1$. That is precisely the
assertion that the gradient $A\T A\,\vv1$ has no component along any tangent
direction $\mathbf{w}$: the lemma's ``no direction improves the stretch'' and the
optimizer's ``the gradient is tangentially zero'' are one computation written
twice. So an optimizer descending this landscape is not merely finding the same
answer as Part~I --- it is re-running the maximality lemma, its every step of not
being able to improve corresponding to a step of the proof.

This is not merely an analogy; it is how large singular value decompositions are
actually computed. Rather than form $A\T A$ and diagonalize it --- impossible for
a web-scale matrix --- one \emph{climbs}: start from a random $\mathbf{v}$,
repeatedly apply $A\T A$ and renormalize (the \emph{power method}), and the
iterate converges to $\vv1$. Peel it off and repeat for $\vv2$, exactly Part~I's
maximize-then-recurse recursion, now run numerically. The neural version,
\emph{Oja's rule}, is stochastic gradient ascent on the same objective and was
one of the early bridges between optimization and unsupervised learning: a single
neuron trained by it extracts the first principal component.

\paragraph{Where the SVD's contribution stops.} That gradient ascent
\emph{converges to the right answer} is not automatic. Maximizing $\|A\mathbf{v}\|$
on the sphere is a non-convex problem with a critical point at every eigenvector;
the method finds the global maximum only because the landscape is benign --- the
non-maximal critical points are saddles the iteration slides off, a property that
has to be proved, not assumed. And the fashionable claim that ``neural networks
do SVD by gradient descent'' should be handled with care: it is a precise theorem
for deep \emph{linear} networks, whose training dynamics learn the singular
directions of the input--output correlation in order of decreasing $\sigma_i$
(echoing Part~I's descending recursion), but it is not a statement about
nonlinear networks in general. The clean connection is to the optimization
landscape and the linear case; beyond that it is inspiration, not theorem.

\section{Low-rank approximation: keeping the largest \texorpdfstring{$\sigma_i$}{sigma}}
\label{sec:lowrank}

Truncate the decomposition: keep the $k$ largest singular values and their
vectors, discard the rest. The result $A_k = \sum_{i=1}^k \sigma_i \uu i \vv i\T$
is a rank-$k$ matrix, and the remarkable fact --- the \emph{Eckart--Young
theorem} --- is that it is the \emph{best} rank-$k$ approximation of $A$ there is:
no other rank-$k$ matrix comes closer, in either the spectral or the Frobenius
norm. The error is exactly the largest discarded singular value,
\begin{equation}\min_{\operatorname{rank}(B) = k} \|A - B\|_2 = \|A - A_k\|_2 = \sigma_{k+1}.
\end{equation}
This is Part~I's geometry read as compression: the singular values, sorted, rank
the directions of $A$ by importance, so keeping the top few keeps as much of $A$
as any rank-$k$ object can. It is the mathematical core of image compression,
noise filtering (small $\sigma_i$ are often noise), latent semantic analysis,
recommender-system factorization (users and items as low-rank factors), and the
low-rank adapters now used to fine-tune large models. In each, the SVD does not
merely \emph{offer} a compression --- it offers the provably optimal one.

\paragraph{Where the SVD's contribution stops.} Optimality is with respect to a
particular measure of error --- the spectral or Frobenius norm, which weight all
entries alike. When the application cares about something else (missing entries,
non-negativity, sparsity, a probabilistic likelihood), the plain SVD truncation
is no longer the right answer, and its relatives --- non-negative matrix
factorization, probabilistic PCA, matrix completion --- take over. The SVD is
optimal for the question ``closest rank-$k$ matrix in Euclidean norm,'' and
knowing that is knowing exactly when to reach for it and when not to.

\section{Eigenvectors of a graph: PageRank and spectral clustering}
\label{sec:graph}

The same machinery that finds the axes of an ellipsoid finds structure in a
network, once the network is written as a matrix. Two examples, both resting on
an eigenvector of a graph.

\emph{PageRank} ranks web pages by importance, where a page is important if
important pages link to it --- a circular definition that resolves into an
eigenvector equation. Encode the link structure as a column-stochastic matrix
$P$ (column $j$ spreads page $j$'s vote evenly over the pages it links to). The
importance vector $\mathbf{r}$ is the one unchanged by the process,
$P\mathbf{r} = \mathbf{r}$: the dominant eigenvector of $P$. And it is found by
precisely the power method of \S\ref{sec:gradient} --- start with uniform
importance, apply $P$ repeatedly, converge. PageRank is the maximize-and-settle
idea of Part~I, run on the web.

\emph{Spectral clustering} splits data into groups by building a graph of
similarities, forming its \emph{Laplacian} $L = D - W$ (degrees minus
weights), and reading off the eigenvector of $L$ with the second-smallest
eigenvalue --- the \emph{Fiedler vector}. Its signs partition the graph into two
well-separated pieces. The eigenvector exists, is orthogonal to the trivial
all-ones eigenvector, and extremizes a meaningful quantity (it minimizes a
relaxed cut) for exactly the reasons Part~I laid out: it is the extremal
direction of a symmetric matrix.

\paragraph{Where the SVD's contribution stops.} The spectral machinery guarantees
that the eigenvectors exist, are orthogonal, and solve an extremal problem --- the
content of Part~I, applied to a graph matrix. What it does \emph{not} decide is
the modeling step that came first: how to turn pages into a stochastic matrix, or
similarities into a Laplacian. Those choices --- which similarity, which
normalization, how many clusters --- are where the domain judgment lives, and the
eigenvector computation is only as good as the matrix it is handed. The SVD
supplies the ``how''; the graph construction is the ``what,'' and it is not the
SVD's to give.

\subsection*{A pattern in what recurred}

Look back at what carried over from Part~I. It was rarely a bare fact. The
maximize-then-recurse construction reappeared as the power method and as
PageRank; the lemma that the maximizer's gradient vanishes tangentially
reappeared as an optimizer's stopping rule; the proof that two Gram matrices are
two ends of one map reappeared as the transport at the center of kernel PCA. In
each case the object that transferred was an \emph{argument} --- a sequence of
steps first walked to establish that something is true --- now walked again to
compute it, or to run it. Not every proof came along: the sign-flip that opened
Part~I, the prettiest argument in it, has no descendant here; it is a way of
\emph{seeing} that the frame exists, not a way of \emph{finding} it. What the
recurrences have in common, and what that one lacks, is left for the reader to
weigh.

\section{Summary}

\begin{quote}
$A$ is completely described by which orthonormal input frame goes to which
orthonormal output frame, and by how much each direction stretches. The SVD's
claim is that such frames always exist.
\end{quote}

\noindent
Linearity means any basis determines $A$ --- that is not what makes the SVD
special. What makes the $V$-basis special is that in it the action is
\emph{diagonal}: feed in $\vv3$, get out pure $\uu3$, scaled. Feed in any other
basis and the images tangle together in a way that hides the geometry. Same
information, worse presentation.

That was Part~I: a frame found by watching an ellipse, a theorem earned rather
than assumed. Part~II spent it, and the spending revealed the article's real
subject. The maximize-and-recurse construction was not left behind on the page
once the SVD was proved --- it walked off to become the power method, PageRank,
the training of a principal-component neuron. The little lemma about where a
stretch is maximal became the condition an optimizer halts on. The observation
that $A\T A$ and $AA\T$ are two ends of one map became the trick that lets a
kernel method reach into a space it cannot afford to enter. What looked like an
island of geometry --- circles, ellipses, a rotating frame --- turned out to be
the mainland: the structure quietly running tools that a working data scientist
uses without a second thought.

The lesson underneath, never argued but repeatedly shown, is that a proof is not
only a certificate that something is true. It is a \emph{procedure} --- a route
walked once to establish a fact, and available to be walked again to compute it.
Understand the route, and you understand the algorithm; and the same
understanding recognizes it the next time it appears wearing different clothes.
That is the case for meeting a piece of mathematics like this one all the way
down to its proofs, rather than importing it as a name and a library call.

\section{Sources and prior art}
\label{sec:sources}

None of the mathematics here is new. The contribution, if any, is one of
arrangement: a path of discovery assembled from standard pieces, with the
powerful theorem placed at the end as a name for what the geometry already built
rather than as an assumption at the start. This section says plainly where each
piece comes from.

\paragraph{The classical results.} The picture of a matrix carrying the unit
sphere to an ellipsoid, with the singular values as semi-axis lengths, is the
standard geometric introduction to the SVD; it is the opening of Trefethen and
Bau's \emph{Numerical Linear Algebra}~\cite{trefethen1997}, whose Lecture~4 this
article's \S\ref{sec:ellipsoid} follows. The variational construction of
\S\ref{sec:variational} --- maximize $\|Av\|$, then recurse on the orthogonal
complement --- is also theirs, and is the proof of existence given in most
graduate texts. The route through the spectral theorem applied to $A\T A$
(\S\ref{sec:algebraic}), and the polar decomposition read off from it
(\S\ref{sec:polar}), are presented exactly this way in Axler's \emph{Linear
Algebra Done Right}~\cite{axler2024}, where the SVD appears as a consequence of
the spectral theorem and leads directly to the polar form. The computational
origin of the subject --- how one actually calculates an SVD --- is Golub and
Kahan~\cite{golub1965}; the early history is surveyed by
Stewart~\cite{stewart1993}. The four-fundamental-subspaces framing used in
\S\ref{sec:ellipsoid} is Strang's~\cite{strang2019}.

\paragraph{The closest relative.} The organizing conceit of this article ---
adjust a perpendicular frame until its images are also perpendicular, and that
configuration is the SVD --- appears, in interactive form, in Heath's
\emph{Interactive Educational Modules in Scientific
Computing}~\cite{heath2018, heath-iem}. That module places candidate singular
vectors on the image ellipse and lets the reader drag them until the frame is
orthogonal in both spaces at once, displaying the resulting $U$, $\Sigma$, $V$.
It is the same equivalence, driven from the image side rather than the domain
side, and it stops at two-dimensional visualization: it lets one \emph{find} the
surviving frame but does not argue that one must exist, and does not carry the
construction to higher dimensions. The existence argument of \S\ref{sec:existence}
and the recursion of \S\ref{sec:variational} are what this article adds on top of
that shared starting point.

\paragraph{The gap this fills.} Tomasi's widely used
notes~\cite{tomasi} state the central observation directly --- the two pairs of
points on the unit circle mapping to the ends of the ellipse's axes ``are always
orthogonal'' --- and then remark that ``simple and fundamental as this geometric
fact may be, its proof by geometric means is cumbersome,'' and prove it
algebraically instead. This article is, in effect, the geometric proof Tomasi
declines to give: the sign-flip argument of \S\ref{sec:planar} is the missing
elementary reason the axis-preimages are perpendicular, and the identity
$dg/d\theta = 2f$ of \S\ref{sec:variational} is what ties that perpendicularity
to the extremal stretch that the algebraic proofs start from. Neither is deep,
and neither is new in substance; but stating them, and staging the whole
development so the spectral theorem is earned rather than assumed, is the point
of the exercise.

This is not a survey. The literature on the SVD is vast, and the references
above are entry points chosen for their relevance to the particular path taken
here, not a representative sample of the field.

\bigskip
\hrule
\bigskip

\noindent
\footnotesize
All numerical values in this document were computed with NumPy's
\texttt{linalg.svd} and verified against the defining properties
($\ip{A\vv1}{A\vv2} = 0$, $\prod\sigma_i = |\det A|$, $\det(UV\T) = +1$).

\end{document}